\newcommand{\ie}{\textit{i.e., }}
\newcommand{\eg}{\textit{e.g., }}
\DeclareMathOperator*{\argmin}{arg\,min}
\newtheorem{theorem}{Theorem}
\newtheorem{lemma}{Lemma}
\newtheorem{proof}{Proof}
\newtheorem{proposition}{Proposition}
\newtheorem{definition}{Definition}
\newcommand{\temp}[1]{{\color{black}#1}}
\title{Attentive Graph Neural Networks for Few-Shot Learning}
\author{
    Hao Cheng\textsuperscript{\rm 1},
    Joey Tianyi Zhou\textsuperscript{\rm 2},
    Wee Peng Tay\textsuperscript{\rm 1},
    Bihan Wen\textsuperscript{\rm 1}\thanks{Bihan Wen is the corresponding author.}
    \\
}
\begin{document}
\maketitle
\begin{abstract}
Graph Neural Networks (GNN) has demonstrated the superior performance in many challenging applications, including the few-shot learning tasks.
Despite its powerful capacity to learn and generalize the model from few samples, GNN usually suffers from severe over-fitting and over-smoothing as the model becomes deep, which limit the scalability.
In this work, we propose a novel Attentive GNN to tackle these challenges, by incorporating a triple-attention mechanism, \ie node self-attention, neighborhood attention, and layer memory attention.
We explain why the proposed attentive modules can improve GNN for few-shot learning with theoretical analysis and illustrations.
Extensive experiments show that the proposed Attentive GNN model achieves the promising results, comparing to the state-of-the-art GNN- and CNN-based methods for few-shot learning tasks, over the mini-ImageNet and tiered-ImageNet benchmarks, under ConvNet-4 and ResNet-based backbone with both inductive and transductive settings.
The codes will be made publicly available.
\end{abstract}

\section{Introduction}
Deep neural networks, \eg Convolutional Neural Networks (CNNs), have been widely applied and achieved the superior results
on computer vision tasks such as image classification, segmentation, etc.
The conventional approach is by supervised learning over a large-scale labeled dataset for the task, thanks to the scalability of CNNs.
However, for some tasks with only a few samples available, training a highly-flexible deep model may result in over-fitting and thus fail to generalize.
Such a challenge presents in \textit{few-shot learning}~\cite{fei2006one}, in which a classifier is learned to predict the labels of the query samples using only a few labeled support samples of each class, as well as the training set contains only data of classes that are different from testing.
Various methods have been recently proposed for few-shot learning ~\cite{vinyals2016matching,snell2017prototypical,sung2018learning,garcia2017few}, including the popular \textit{meta-learning} framework~\cite{vinyals2016matching} based on \emph{episodic training}.
Meta-learning splits the training set into a large number of sub-tasks to simulate the testing task, which are used to train a meta-learner to adapt quickly to novel classes in few gradient updates.
Moreover, methods~\cite{vinyals2016matching,snell2017prototypical,sung2018learning} based on metric learning were proposed to learn a general feature embedding to exploit correlation between samples and classes.

Most of the existing few-shot learning methods are based on CNNs, which are effective at modeling image local properties.
However, in few-shot learning tasks, it is more important to exploit the intra- and inter-class relationships among samples.
Therefore, more recent works focused on learning the Graph Neural Network (GNN)~\cite{garcia2017few,liu2018learning,kim2019edge,yang2020dpgn} or Graph Convolutional Networks (GCN)~\cite{zhang2020adargcn,ye2020few}, to perform node or edge feature aggregation from neighbor samples by graph convolution.
However, several works~\cite{li2018deeper,rong2019dropedge} reported the over-fitting and over-smoothing issues when learning deeper GNN models (\ie poor scalability), as applying GCN or GNN is a special form of Laplacian smoothing, which averages the neighbors of the target nodes. 
Very recent work~\cite{rong2019dropedge} attempted to alleviate these obstacles via randomly dropping graph edges in training, showing improvement for node classification.
To the best of our knowledge, no work to date has addressed these issues for few-shot learning using graph attention mechanism.

In this work, we propose a novel Attentive GNN for highly-scalable and effective few-shot learning.
We incorporate a novel triple-attention mechanism, \ie node self-attention, neighborhood attention, and layer memory attention, to tackle the over-fitting and over-smoothing challenges towards more effective few-shot image classification.
Specifically, the node self-attention exploits inter-node and inter-class correlation beyond CNN-based features.
Neighborhood attention modules impose sparsity on the adjacency matrices, to attend to the most related neighbor nodes.
Layer memory attention applies dense connection to earlier-layer ``memory'' of node features.
Furthermore, we explain how the attentive modules help GNN generate discriminative features, and alleviate over-smoothing and over-fitting, with feature visualization and theoretical analysis.
We conduct extensive experiments showing that the proposed Attentive GNN achieves comparable results to the state-of-the-art methods over the mini-ImageNet and tiered-ImageNet datasets, under both inductive and transductive settings.

\section{Related Work}

\textbf{GNN for Few-shot Learning.} GNN~\cite{bruna2013spectral,defferrard2016convolutional} was first proposed for learning with the graph-structured data, and has been proved as a powerful technique for aggregating information from the neighboring vertices.
GNN was first used for few-shot learning~\cite{garcia2017few}, which aims to learn a complete graph network of nodes with both feature and class information.
Based on the episodic training mechanism, meta-graph parameters were trained to predict the label of a query node on the graph.
Later, TPN~\cite{liu2018learning} introduced the transductive setting into few-shot learning and constructed a top-k graph with a close-form label propagation based on node relationship.
Besides node label information, EGNN~\cite{kim2019edge} exploits edge information for directed graph by defining both class and edge labels for fully exploring the internal information of the graph.
Moreover, DPGN~\cite{yang2020dpgn} constructs a dual complete graph network to combine instance-level and distribution-level relations.

\textbf{Attention Mechanism.}
Attention Mechanism~\cite{vaswani2017attention} aims to focus more on regions which are more related for tasks and less on unrelated regions by learning a mask matrix or a weighted matrix.
In particular, self-attention~\cite{cheng2016long,parikh2016decomposable} considers the inherent correlation (attention) of the input features itself, which is mostly applied in graph model.
For node classification, GAT~\cite{velivckovic2017graph} used a graph attention layer to learn a weighted parameter vector based on entire neighborhoods to update node representation.
SAGPool~\cite{lee2019self} selected the top k$N$ nodes to generate mask matrix for graph pooling.
Moreover, attention mechanism is also utilized for few-shot learning.
CAN~\cite{hou2019cross} generated cross attention maps for each pair of nodes to highlight the object regions for classification.
Inspired by non-local block, Binary Attention Network~\cite{ke2020compare} considered a non-local attention module to learn the similarity globally.
Considering the attention between query sample with each support class, CTM~\cite{li2019finding} found task-relevant features based on both intra-class commonality and inter-class uniqueness. FEAT~\cite{ye2020few} utilized Transformer to learn task-specific adaptive instance embeddings.

\section{Attentive Graph Neural Networks}

\subsection{Preliminaries}
GNN~\cite{sperduti1997supervised,bruna2013spectral,defferrard2016convolutional} are the neural networks based on a graph structure $\bf{G}=(V,E)$ with nodes $\bf{V}$ and edges $\bf{E}$. Similar to the classic CNNs that exploit the local features (\eg image patch textures, sparsity) for representation, GNN is developed to mimic the behaviour of CNNs to deal with graph structured data which regards each sample (\eg image) as a vertex on the graph, and focuses on mining the important neighborhood information of each node, which is critical to construct discriminative and generalizable features for many tasks, \eg node classification, few-shot learning, etc. 
To be specific, considering a multi-stage GNN model, following~\cite{kipf2016semi} the output of the $k$-th GNN layer can be represented as:

\begin{equation} \label{eq1}
\mathbf{X}^{(k+1)} = \operatorname{F}_k ( \mathbf{X}^{(k)},\, \mathbf{W}^{(k)} ) = \rho \, ( \hat{\mathbf{A}}^{(k)} \, \mathbf{X}^{(k)} \, \mathbf{W}^{(k)} )
\end{equation}
where ${\mathbf{X}^{(k)}} \in \mathbb{R}^{V \times d_k}$ denotes the input feature and $x_{i}^{(k)}$ denotes the feature of node $i$ in the $k$-th layer, with $V$ and $d_{k}$ being the number of nodes and feature dimension at the $k$-th layer. 
Besides, $\hat{\mathbf{A}}^{(k)}  \in \mathbb{R}^{V \times V}$ is called the weighted adjacency matrix, $\mathbf{W}^{(k)} \in \mathbb{R}^{d_{k} \times d_{k+1}}$ is the trainable linear transformation, and $\rho$ denotes a non-linear function, \eg ReLU or Leaky-ReLU.

There are different ways to construct the adjacency matrix $\mathbf{A}^{(k)}$, \eg$\mathbf{A}^{(k)}_{i,j}$ indicates whether node $i$ and $j$ are directly connected in the classic GCN~\cite{bruna2013spectral}. 
Besides, $\mathbf{A}^{(k)}_{i,j}$ can be the similarity or distance between node $i$ and $j$~\cite{vinyals2016matching}, \ie $\mathbf{A}_{i,j}^{(k)}=f_\theta(\phi\left(x_i^{(k)}\right),\phi\left(x_j^{(k)}\right))$, where $\phi$ denotes the node feature embedding, and the parameter $\theta$ can be fixed or learned.
One popular example is to apply cosine correlation as the similarity metric, while a more flexible method is to learn a multi-layer perceptron (MLP) as the metric, \ie $f_\theta(x_i^{(k)},x_j^{(k)})=\mathrm{MLP}\left(\left|\mathrm{x}_{i}^{(k)}-\mathrm{x}_{j}^{(k)}\right|\right)$, where $\left|\cdot\right|$ denotes the absolute function.
More recent works applied Gaussian similarity function to construct the adjacency, \eg TPN~\cite{liu2018learning} proposed the similarity function as $\mathbf{A}_{i,j}=\exp \left(-0.5 d\left(\phi\left(x_i\right)/\sigma_{i}, \phi\left(x_j\right)\sigma_{j}\right)\right)$, 
with $\sigma$ being an example-wise length-scale parameter learned by a relation network of nodes used for normalization.

Different from the classic GNNs, the recently proposed GAT~\cite{velivckovic2017graph} exploited attention mechanism amongst all neighbor nodes in the feature domain, after the linear transformation $\mathbf{W}^{(k)}$ and computes the weights $\alpha$'s based on attention coefficients for graph update as:

\begin{equation} \label{eq:gat}
x_{i}^{(k+1)}=\rho\left(\sum \mathop{}_{\mkern-5mu j \in \mathcal{N}_{i}} \alpha_{i j} x_{j}^{(k)}\mathbf{W}^{(k)}\right)
\end{equation}
where $\mathcal{N}_{i}$ denotes the set of the neighbor (\ie connected) nodes of $x_{i}$.
GAT considers self-attention on the nodes after the linear transformation $\mathbf{W}$.
With a shared attention mechanism, GAT allows all neighbor nodes to attend on the target node.
However, GAT only considers the relationship among neighbors in the same layer while it fails to utilize the layer-wise information, which may lead to over-smoothing.

\subsection{What We Propose: Attentive GNN}
We propose an Attentive GNN model which contains three attentive mechanisms, \ie node self-attention, neighborhood attention, and layer memory attention. 
Fig.~\ref{fig:frame} shows the pipeline of Attentive GNN for the few-shot learning, and Fig.~\ref{fig:frame2} illustrates the details of one Attentive GNN layer.
We discuss each of the attention mechanisms, followed by how Attentive GNN is applied for few-shot learning.

\subsubsection{Node Self-Attention.}
Denote the feature of each sample $i$ (\ie node) as $\mathbf{x}_i \in \mathbb{R}^{d}$, and the one-hot vector of its corresponding label as $\mathbf{y}_{i}\in\mathbf{R}^{N}$, where $d$ is the feature dimension, $N$ is the total number of classes, and $1 \leq i \leq V$.
The one-hot vector sets only the element corresponding to the ground-truth category to be 1, while the others are all set to 0.
We propose the \textbf{node self-attention} to exploit the inter-class and inter-sample correlation at the initial stage. 
Denote the sample matrices and label matrices as:

\begin{equation}\begin{array}{c}
\mathbf{X}=\left[\mathbf{x}_{1}, \mathbf{x}_{2}, \ldots, \mathbf{x}_{V}\right]^{T} \in R^{V \times d}, \\
\mathbf{Y}=\left[\mathbf{y}_{1,} \mathbf{y}_{2,}, \ldots, \mathbf{y}_{V}\right]^{T} \in R^{V \times N}.
\end{array}\end{equation}
The first step is to calculate the sample and label correlation matrices as:

\begin{equation}
\mathbf{C}^{\mathbf{x}} = \mathrm{softmax}(\mathbf{X} \mathbf{X}^{T}\odot\mathbf{M}),\;\mathbf{C}^{\mathbf{y}} = \mathrm{softmax}(\mathbf{Y} \mathbf{Y}^T).
\end{equation}
Here, $\mathbf{M}$ is the normalization matrix defined as $\mathbf{M}(i,j)=\left(\left \| \mathbf{x}_i \right \|_2 \left \| \mathbf{x}_j \right \|_2\right)^{-1}$, $\odot$ denotes a point-wise product operator, and $\mathrm{softmax} (\cdot)$ denotes a row-wise softmax operator for the sample and label correlation matrices. 
Take the sample correlation as an example, and let $\mathbf{P} = \mathbf{X} \mathbf{X}^T \in \mathbb{R}^{V \times V}$. The row-wise softmax operator is defined as:

\begin{equation}
\mathbf{C}^{\mathbf{X}} (i, j) = \exp \left\{\mathbf{P} (i, j)\right\} / \sum \mathop{}_{\mkern-5mu k \in \mathcal{N}_{i}} \exp \left \{ \mathbf{P} (i, k)\right \}
\end{equation}
where $\mathcal{N}_{i}$ denotes the set of nodes that are connected to the $i$-th node.

The proposed node self-attention module exploits the correlation amongst both sample features and label vectors, which should share the information from different perspectives for the same node.
Thus, the next step is to fuse $\mathbf{C}^{\mathbf{x}}$ and $\mathbf{C}^{\mathbf{y}}$ using trainable $1 \times 1$ kernels as:

\begin{equation}\mathbf{C}^{\mathbf{f}}= \operatorname{fusion}_{\tau}\left(\left[\mathbf{C}^{\mathbf{x}},\mathbf{C}^{\mathbf{y}} \right]\right) \in \mathbf{R}^{V \times V}
\end{equation}
where $\left[\mathbf{C}^{\mathbf{x}},\mathbf{C}^{\mathbf{y}} \right]$ denotes the attention map concatenation, and $\tau$ is the kernel coefficients.
\temp{
This fusion function $\operatorname{fusion}_{\tau}$ is equivalent to $\mathbf{C}^f = w_1\mathbf{C}^x +w_2\mathbf{C}^y$, where the weighted parameters $w_1$, $w_2$ are learned adaptively.
}
With the fused self-attention map, both the feature and the label vectors are updated on the nodes:

\begin{equation} \label{eq:selfAtt}
\tilde{\mathbf{X}}^{(1)} = \mathbf{C}^{\mathbf{f}} \mathbf{X} \;, \;\;\;\; \mathbf{Y}^{(1)}=\alpha\mathbf{Y} + \left(1-\alpha\right)\mathbf{C}^{\mathbf{f}}\mathbf{Y} 
\end{equation}
where $\alpha\in[0,1]$ is a weighting parameter. 
Different from the feature update, the label update preserves the initial labels, which are the ground truth, in the support set, using the weighting parameter $\alpha$ to regularize the label update.
The updated sample features $\tilde{\mathbf{X}}^{(1)}$ and labels $\mathbf{Y}^{(1)}$ are concatenated to form the node features $\mathbf{X}^{(1)}$.
 \begin{figure*}[!t]
   \centering
   \includegraphics[width=0.75\paperwidth]{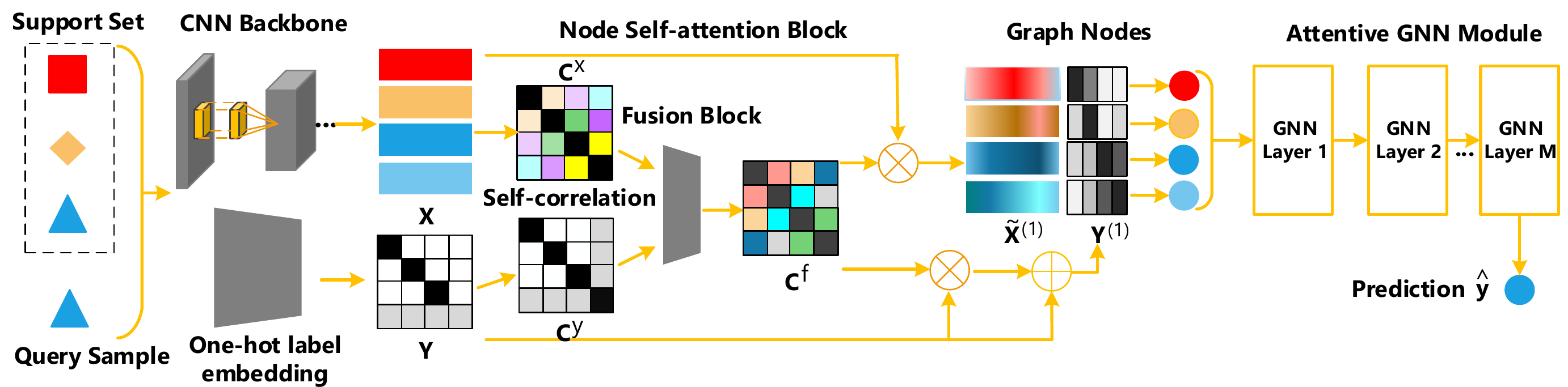}
   \caption{Illustration of the proposed Attentive GNN framework for the few-shot learning task. In the support set and query sample, the color and shape of the sample represent its corresponding class.}
   \label{fig:frame}
 \end{figure*}
  \begin{figure}[!t]
   \centering
   \includegraphics[width=1.0\linewidth]{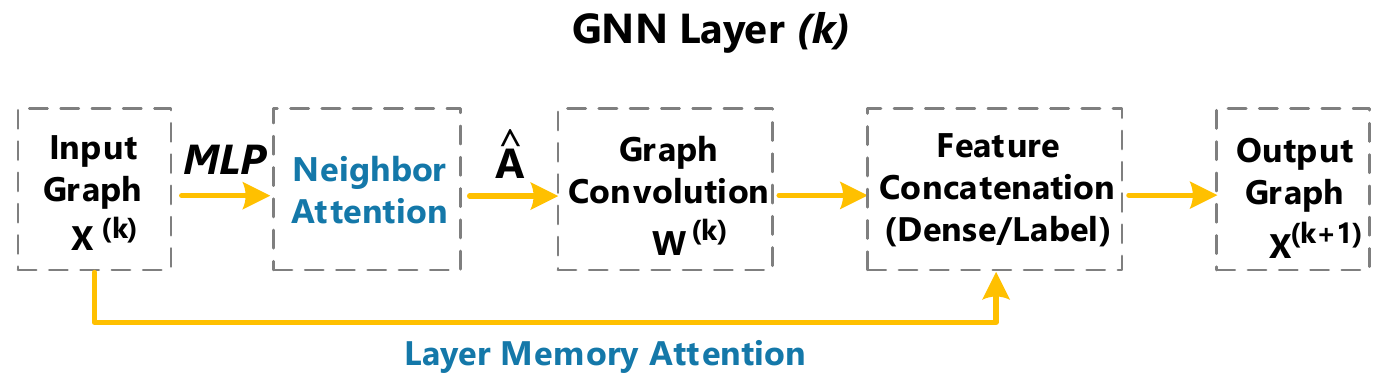}
   \caption{Illustration of $k$-th layer of Attentive GNN module.}
   \label{fig:frame2}
 \end{figure}
\subsubsection{Graph Neighbor Attention via Sparsity.}
Similar to various successful GNN framework, the proposed Attentive GNN applies a MLP to learn the adjacency matrix $\mathbf{A}_{ij}$ for feature updates.
When the GNN model becomes deeper, the risk of over-smoothing increases as GNN tends to mix information from all neighbor nodes and eventually converge to a stationary point in training.
To tackle this challenge, we propose a novel \textbf{graph neighbor attention} via sparsity constraint to attend to the most related nodes:

\begin{equation} \label{eq:attSpar}
\begin{array}{c}
\hat{\mathbf{A}}^{(k)} = \argmin_{\mathbf{A}^{(k)}} \left \| \mathbf{A}^{(k)} - \mathbf{U}^{(k)} \right \|_F \\
s.t. \; \mathbf{U}^{(k)} (i,j) = \mathrm{MLP}^{(k)}\left(\left|\mathbf{x}_{i}^{(k)}-\mathbf{x}_{j}^{(k)}\right|\right)\!, \left\| \mathbf{A}_i^{(k)}\right\|_0 \!\leq\! \beta V.
\end{array}
\end{equation}
Here, $\mathbf{A}_i^{(k)} \in \mathbb{R}^{1 \times V}$ denotes the $i$-th row of $\mathbf{A}^{(k)}$, $\beta \in (0, 1]$ denotes the ratio of nodes maintained for feature update, and $V$ is the number of graph nodes.
With the $\ell_0$ constraint, the adjacency matrix $\hat{\mathbf{A}}^{(k)}$ has up to $\beta V$ non-zeros in each row, corresponding to the \textbf{attended neighbor nodes}.
The solution to (\ref{eq:attSpar}) is achieved using the projection onto a $\ell_0$ unit ball, \ie keeping the $\beta V$ elements of each $\mathbf{U}_i^{(k)}$ with the largest magnitudes~\cite{wen2015structured}. Since the solution to (\ref{eq:attSpar}) is non-differentiable, we apply alternating projection for training, \ie in each epoch $\mathbf{U}^{(k)}$'s are first updated using back-propagation, followed by (\ref{eq:attSpar}) to update $\hat{\mathbf{A}}^{(k)}$ which is constrained to be sparse.
\temp{
For simplicity, we keep the top-k value for each row of $\mathbf{U}^{(k)}$ and set the others to 0 to construct the sparse matrix with $k=\beta V$.}

\subsubsection{Layer Memory Attention.}
To avoid the over-smoothing and over-fitting issues due to ``over-mixing'' neighboring nodes information, another approach is to attend to the ``earlier memory'' of intermediate features at previous layers.
Inspired by DenseNet~\cite{huang2017densely}, JKNet~\cite{xu2018representation}, GFCN~\cite{ji2020gfcn} and few-shot GNN~\cite{garcia2017few}, we densely connect the output of each GNN layer, as the intermediate GNN-node features maintain the consistent and more general representation across different GNN layers.

The proposed attentive GNN applies the transition function based on (\ref{eq1}).
In addition, we utilize graph self-loop \ie identity matrix $\mathbf{I}$ to utilize self information as:

\begin{equation} \label{neweq1}
\operatorname{F}_k ( \mathbf{X}^{(k)},\, \mathbf{W}^{(k)} ) = \rho \, ( \left[\hat{\mathbf{A}}^{(k)} \, \mathbf{X}^{(k)} \| \,\mathbf{I} \, \mathbf{X}^{(k)}\right] \mathbf{W}^{(k)} )
\end{equation}
where $\|$ means row-wise feature concatenation and $\mathbf{W}^{(k)} \in \mathbb{R}^{2d_k \times m}$.
Furthermore, instead of using $\operatorname{F}_k (\mathbf{X}^{(k)},\mathbf{W}^{(k)}) \in \mathbf{R}^{V \times m}$ directly as the input node feature at the $(k$+$1)$-th layer, we propose to attend to the ``early memory'' referring to~\cite{garcia2017few} by concatenating the node feature at the $k$-th layer as:

\begin{equation} \label{gnn_dense}
\mathbf{X}^{(k+1)} = \left[ \mathbf{X}^{(k)}, \operatorname{F}_{k} \left( \mathbf{X}^{(k)}, \mathbf{W}^{(k)} \right) \right] \in \mathbb{R}^{{V} \times {(d + N + km)} }\;.
\end{equation}

However, as the number of GNN layers $k$ increases, the time and memory complexity will continue to increase. 
In order to reduce the memory complexity, we use label feature concatenation to replace dense connection as:

\begin{equation} \label{gnn_labelcon}
\mathbf{X}^{(k+1)} = \left[ \operatorname{F}_{k} \left( \mathbf{X}^{(k)}, \mathbf{W}^{(k)} \right), \mathbf{Y}^{(1)} \right] \in \mathbb{R}^{{V} \times {(m + N)} }\;.
\end{equation}
where $\mathbf{Y}^{(1)}$ is the new label feature updated by (\ref{eq:selfAtt}).

For these two memory attention mechanisms, there is only 
$V \times m$ new features introduced in a new layer, while the other features (node feature of earlier layer $\mathbf{X}^{(k)}$ or label feature $\mathbf{Y}^{(1)}$ ) are attended to the early memory.

\subsection{Application: Few-Shot Learning Task}

\subsubsection{Problem Definition.}
We apply the proposed Attentive GNN for the few-shot image classification tasks: 
Given a large-scale and labeled training set with classes $\mathcal{C}_{train}$, and a few-shot testing set with classes $\mathcal{C}_{test}$ which are mutually exclusive, \ie $\mathcal{C}_{train} \cap \mathcal{C}_{test} = \emptyset$, we aim to train a classification model over the training set, which could be applied to the test set with only few labeled information.
Such test is called the $N$-way $K$-shot task $\mathcal{T}$, where $K$ is the number of labeled samples which is often set from 1 to 5, \ie the testing set contains a support set $\mathcal{S}$ which is labeled, and a query set $\mathcal{Q}$ to be predicted, denoted as $\mathcal{T}=\mathcal{S} \cup \mathcal{Q}$.
The $N$ and $K$ are both very small for few-shot learning.

\subsubsection{Attentive Model for Few-Shot Learning.}
Following the same strategy of episodic training~\cite{vinyals2016matching} and meta-learning framework, we simulate $N$-way $K$-shot tasks which are randomly sampled from the training set, in which the support set includes $K$ labeled samples (\eg images) from each of the $N$ classes and the query set includes unlabeled samples from the same $N$ classes. 
Each task is modeled as a complete graph~\cite{garcia2017few}, in which each node represents an image sample and its label.
The objective is to learn the parameters of Attentive GNN using the simulated tasks, which is generalizable for an unseen few-shot task.

\subsubsection{Loss Function.}
For each simulated few-shot task $\mathcal{T}_{train}$ with its query set $\mathcal{Q}=\left\{\left(\mathbf{x}_{i}, y_{i}\right)\right\}_{i=1}^{Q}$, the parameters of the backbone feature extractor, self-attention block $\tau$, and the GNNs $\{ \mathrm{MLP}^{(k)},\mathbf{W}^{(k)} \}$ are trained by minimizing the cross-entropy loss of classes over all query samples as:

\begin{equation}
\sum \mathop{}_{\mkern-5mu k \in \mathcal{N}_{i}}
\mathbf{L}_{\mathcal{T}_{train}}=-\sum \mathop{}^{\mkern-5mu Q}_{\mkern-5mu i = 1} y_{i} \log P\left(\hat{y}_i=y_{i} | \mathcal{T}\right)
\end{equation}
where $\hat{y}_i$ and $y_i$ denote the predicted and ground-truth labels of the query sample $\mathbf{x}_{i}$, respectively.
We evaluate the proposed Attentive GNN for few-shot task using both \textbf{inductive} and \textbf{transductive} settings, which correspond to $Q = 1$, and $Q = Nq$ with $q > 1$, respectively.
\temp{
And for each query sample in N-way K-shot task, we initialize the one-hot feature $y$ with uniform distribution (\ie each value is set to $1/N$) or all 0 (\ie each value is set to $0$).}

\section{Why it works}

\textbf{Discriminative Sample Representation.} It is critical to obtain the initial feature representation of the samples that are sufficiently discriminative (\ie samples of different classes are separated) for the GNN models in few-shot tasks.
However, most of the existing GNN models work with generic features using CNN-based backbone, and fail to capture the task-specific structure.
The proposed node self-attention module exploits the cross-sample correlation, and can thus effectively guide the feature representation for each few-shot task.
Fig.~\ref{fig-2} compares two examples of the graph features for 5-way 1-shot transductive learning using t-SNE visualization~\cite{maaten2008visualizing}, using the vanilla GNN and the proposed Attentive GNN.
The vanilla GNN generates node representation that are ``over-smoothed'' due to bad initial feature using CNN-based backbone.
On the contrary, the node self-attention module can effectively generate the discriminative features, which lead to the more promising results using the Attentive GNN.

\begin{figure*}[!t]
\centering
\subfigure[Vanilla GNN]{
    \includegraphics[width=5cm,height=3cm]{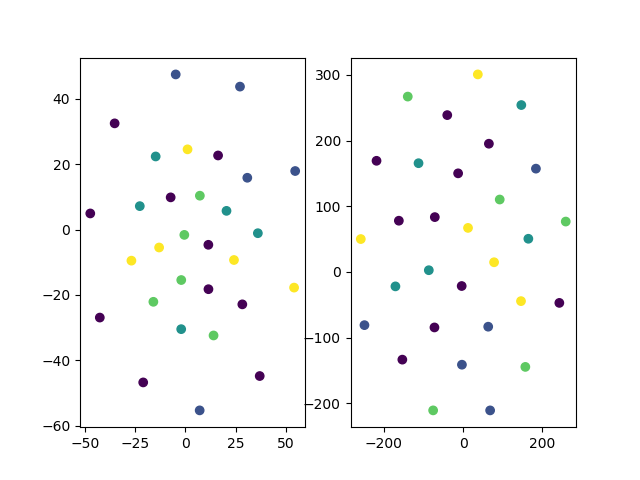}}
\quad
\subfigure[Attentive GNN]{
    \includegraphics[width=7cm,height=3cm]{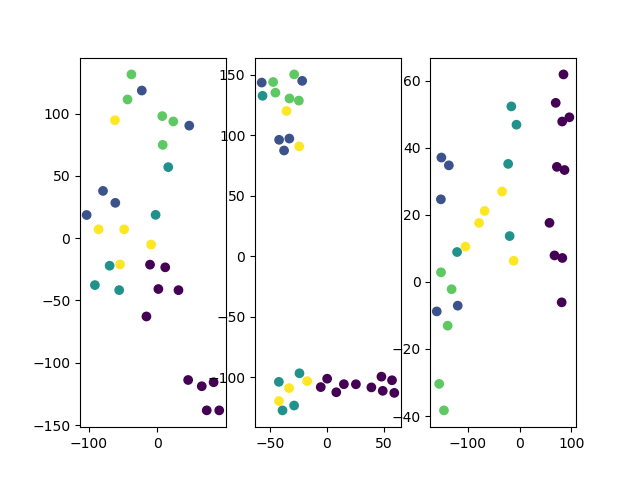}}
\caption{ t-SNE visualization~\cite{maaten2008visualizing} of the graph features under 5-way 1-shot transductive setting using (a) vanilla GNN, and (b) Attentive GNN. Samples of different classes are color-coded. Leftmost plots: the initial feature embedding; Rightmost plots: final output; Middle plot of (b): output by the node self-attention.}
\label{fig-2}
\end{figure*}
\textbf{Alleviation of Over-Smoothing and Over-Fitting problems.}
\textit{Over-fitting} arises when learning an over-parametric model from the limited training data, and it is extremely severe as the objective of few-shot learning is to generalize the knowledge from training set for few-shot tasks.
On the other hand, \textit{over-smoothing} phenomenon refers to the case where the features of all (connected) nodes converge to similar values as the model depth increases.
We provide theoretical analysis to show that the proposed triple-attention mechanism can alleviate both over-fitting and over-smoothing in GNN training.
For each of the result, the proof sketch is presented, while the corresponding \textbf{full proofs} are included in the \textbf{Appendix}.

\begin{lemma} \label{lemma:self-att}
The node self-attention module is equivalent to a GNN layer if $\alpha = 0$ as
\begin{equation}
    \mathbf{X}^{(k)} = \begin{bmatrix} \mathbf{X}, \mathbf{Y} \end{bmatrix}\,,\;\; \mathbf{A}^{(k)} = \mathbf{C}^{\mathbf{f}}\,,\;\; \mathbf{W}^{(k)} = \mathbf{I}\,.\;\;
\end{equation}
\end{lemma}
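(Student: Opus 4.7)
The plan is to unwind the node self-attention update in equation~(\ref{eq:selfAtt}) under the assumption $\alpha = 0$, and match it term-by-term with the generic GNN layer from equation~(\ref{eq1}). First I would write out both updates explicitly: the self-attention block produces $\tilde{\mathbf{X}}^{(1)} = \mathbf{C}^{\mathbf{f}} \mathbf{X}$ and, when $\alpha = 0$, $\mathbf{Y}^{(1)} = \mathbf{C}^{\mathbf{f}} \mathbf{Y}$, which are then concatenated column-wise into the node feature matrix $\mathbf{X}^{(1)} = [\tilde{\mathbf{X}}^{(1)}, \mathbf{Y}^{(1)}]$.

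Next I would use the block-matrix identity $\mathbf{C}^{\mathbf{f}} [\mathbf{X}, \mathbf{Y}] = [\mathbf{C}^{\mathbf{f}} \mathbf{X}, \mathbf{C}^{\mathbf{f}} \mathbf{Y}]$, i.e., left-multiplication by $\mathbf{C}^{\mathbf{f}}$ distributes over the horizontal concatenation. This lets me factor the output as
\begin{equation*}
\mathbf{X}^{(1)} = \mathbf{C}^{\mathbf{f}} \begin{bmatrix} \mathbf{X}, \mathbf{Y} \end{bmatrix}.
\end{equation*}
Setting $\mathbf{X}^{(k)} = [\mathbf{X}, \mathbf{Y}] \in \mathbb{R}^{V \times (d+N)}$, $\mathbf{A}^{(k)} = \mathbf{C}^{\mathbf{f}} \in \mathbb{R}^{V \times V}$, and $\mathbf{W}^{(k)} = \mathbf{I} \in \mathbb{R}^{(d+N) \times (d+N)}$, the right-hand side of equation~(\ref{eq1}) becomes $\rho(\mathbf{C}^{\mathbf{f}} [\mathbf{X}, \mathbf{Y}] \mathbf{I}) = \rho(\mathbf{C}^{\mathbf{f}} [\mathbf{X}, \mathbf{Y}])$, which equals $\mathbf{X}^{(1)}$ whenever $\rho$ is the identity (or, more weakly, whenever $\rho$ is applied element-wise and fixes the range of values produced by the softmax-weighted mixture, e.g., a leaky-ReLU on non-negative entries).

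The only mildly delicate point is how to read the nonlinearity $\rho$: the cleanest statement is that the self-attention update realizes the \emph{affine part} of a GNN layer with the declared parameter choices. I would therefore note explicitly that, with $\rho = \operatorname{id}$, the equality is exact; otherwise the self-attention output coincides with a GNN layer whose activation is the identity, and this is what the lemma intends. No iterative argument, spectral property, or combinatorial bound is needed, so the proof is essentially a one-line linear-algebra verification once the block-multiplication step is made explicit, and I do not anticipate any substantive obstacle beyond clarifying this $\rho$ convention.
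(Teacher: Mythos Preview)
Your proposal is correct and follows essentially the same approach as the paper: both plug the stated choices into the GNN layer formula~(\ref{eq1}) and observe that $\mathbf{C}^{\mathbf{f}}[\mathbf{X},\mathbf{Y}]\,\mathbf{I} = [\mathbf{C}^{\mathbf{f}}\mathbf{X},\mathbf{C}^{\mathbf{f}}\mathbf{Y}]$ matches the self-attention output when $\alpha=0$. In fact you are more careful than the paper's own proof, which silently drops $\rho$ without comment; your explicit remark that the equivalence holds for $\rho=\operatorname{id}$ (or that it realizes the affine part of the layer) is a welcome clarification rather than a deviation.
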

\noindent\textbf{Proof Sketch}
\textit{The feature and label vector updates using (\ref{eq:selfAtt}) is similar to multiplying with an adjacency matrix in (\ref{eq1}), while such matrix is obtained in a self-supervised way.}

\begin{proposition} \label{prop:self-att}
Applying the node self-attention module to replace a GNN layer in Attentive GNN, reduces the trainable-parameter complexity from $\mathcal{O}\{ d ( d + L) \}$ to $\mathcal{O}\{ 1 \}$, where $L$ denotes the depth of MLP for generating the adjacency metric.
\end{proposition}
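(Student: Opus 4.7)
The plan is to compare the number of trainable parameters introduced by a standard GNN layer (as in Eq.~(\ref{eq1}) together with the MLP that produces its adjacency) against those introduced by the node self-attention module, and show that the latter is independent of both $d$ and $L$.

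First I would audit the parameter count of a generic GNN layer. Two sources contribute: (i) the linear transformation $\mathbf{W}^{(k)}\in\mathbb{R}^{d\times d}$, which alone contributes $\Theta(d^{2})$ entries; and (ii) the $L$-layer MLP that maps a $d$-dimensional input $|\mathbf{x}_{i}^{(k)}-\mathbf{x}_{j}^{(k)}|$ to the scalar entries $\mathbf{U}^{(k)}(i,j)$, which contributes $\Theta(dL)$ parameters under the paper's constant-width convention. Summing gives the stated $\mathcal{O}\{d(d+L)\}$.

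Next I would invoke Lemma~\ref{lemma:self-att} to recast the self-attention block as a GNN layer with $\mathbf{W}^{(k)}=\mathbf{I}$ and $\mathbf{A}^{(k)}=\mathbf{C}^{\mathbf{f}}$, then account for each building block inside it. The identity weight matrix is fixed and contributes zero parameters. The two correlation matrices $\mathbf{C}^{\mathbf{x}}$ and $\mathbf{C}^{\mathbf{y}}$ are built solely from inner products, the cosine-type normalization $\mathbf{M}$, and a row-wise softmax; all three are parameter-free transformations of inputs that are themselves produced by upstream modules (the backbone) whose parameters are not charged to this layer. The only learnable quantities native to the module are the $1\times 1$ fusion kernel coefficients $\tau$, which, by the paper's own simplification $\mathbf{C}^{\mathbf{f}}=w_{1}\mathbf{C}^{\mathbf{x}}+w_{2}\mathbf{C}^{\mathbf{y}}$, reduce to two scalars. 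Hence the module contributes exactly $\mathcal{O}\{1\}$ parameters, establishing the reduction.

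The main subtlety I expect to handle carefully is bookkeeping: I must make explicit that ``trainable parameters'' refers only to those introduced by the layer in question, so that the backbone features feeding into $\mathbf{X}\mathbf{X}^{T}$ and $\mathbf{Y}\mathbf{Y}^{T}$ are not double-counted, and that under the Lemma's setting $\alpha=0$ the weighting scalar $\alpha$ vanishes as a free parameter. With that convention fixed, the proof is essentially a component-by-component enumeration rather than a calculation, so no technical obstacle beyond a clean statement of the counting convention is anticipated.
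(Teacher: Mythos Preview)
Your proposal is correct and follows essentially the same route as the paper: count the $\mathcal{O}(d^{2})$ parameters in $\mathbf{W}^{(k)}$ and the $\mathcal{O}(dL)$ parameters in the MLP for a standard GNN layer, then invoke Lemma~\ref{lemma:self-att} to argue that in the self-attention module these are fixed and only the $1\times 1$ fusion kernels $\tau$ remain trainable, giving $\mathcal{O}(1)$. Your version is slightly more explicit about the bookkeeping (parameter-free construction of $\mathbf{C}^{\mathbf{x}},\mathbf{C}^{\mathbf{y}}$ and the $\alpha=0$ convention), but the argument is the same.
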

\noindent\textbf{Proof Sketch}
\textit{The trainable parameters in a GNN layer (\ref{eq1}) are mainly the linear transformation $\mathbf{W}^{(k)}$ and the MLP$^{(k)}$, which scale as $\mathcal{O}\{d^2\}$ and $\mathcal{O}\{dL\}$, respectively. 
On the contrary, the graph self-attention only involve the $1 \times 1$ kernels that are trainable.}

Lemma~\ref{lemma:self-att} and Proposition~\ref{prop:self-att} prove that the node self-attention module involves much fewer trainable parameters than a normal GNN layer.
Thus, applying node self-attention instead of another GNN layer will reduce the model complexity, thus lowering the risk of over-fitting.

Next we show that using graph neighbor attention can help alleviate over-smoothing for training GNN. 
The analysis is based on the recent works on dropEdge~\cite{rong2019dropedge} and GNN information loss~\cite{oono2020graph}.
They proved that a sufficiently deep GNN model will always suffer from ``$\epsilon$-smoothing''~\cite{oono2020graph}, where $\epsilon$ is defined as the error bound of the maximum distance among node features. Another concept is the ``information loss''~\cite{oono2020graph} of a graph model $\bf{G}$, \ie the dimensionality reduction of the node feature-space after $T$ layers of GNNs, denoted as $\Theta_{T, \bf{G}}$. We use these two concepts to quantify the over-smoothing issue in our analysis.

\begin{theorem} \label{theo} 
Denote the same multi-layer GNN model with and without neighbor attention as $\tilde{\bf{G}}$ and $\bf{G}$, respectively.
Besides, denote the number of GNN layers for them to encounter the $\epsilon$-smoothing~\cite{oono2020graph} as $T(\tilde{\bf{G}}, \epsilon)$ and $T(\bf{G}, \epsilon)$, respectively.
With sufficiently small $\beta$ in the node self-attention module, either (i) $T(\tilde{\bf{G}}, \epsilon) \leq T(\bf{G}, \epsilon)$, or (ii) $\Theta_{T(\bf{G}, \epsilon), \bf{G}} > \Theta_{T(\tilde{\bf{G}}, \epsilon), \tilde{\bf{G}}}$, will hold.
\end{theorem}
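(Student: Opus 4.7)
The plan is to cast the graph neighbor attention module as a row-wise structured sparsification of the propagation matrix, and then transport the $\epsilon$-smoothing / information-loss dichotomy developed for DropEdge~\cite{rong2019dropedge} under the spectral framework of~\cite{oono2020graph} to this setting. First I would recall the relevant quantitative bounds from~\cite{oono2020graph}: for an admissible GNN with augmented normalized adjacency $\mathbf{P}$, the distance from $\mathbf{X}^{(k)}$ to the invariant subspace $\mathcal{M}$ contracts geometrically with rate at most $s\lambda$, where $s$ upper bounds the spectral norms of the layer weights and $\lambda$ is the largest singular value of $\mathbf{P}$ restricted to $\mathcal{M}^{\perp}$. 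Consequently the $\epsilon$-smoothing depth $T(\mathbf{G},\epsilon)$ is monotonically decreasing in $\lambda$, while the information loss $\Theta_{T,\mathbf{G}}$ is monotonically increasing in the codimension of $\mathcal{M}$.

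Next I would identify the $\ell_0$ projection in~(\ref{eq:attSpar}) with a deterministic edge drop that, in each row of $\mathbf{U}^{(k)}$, retains only the $\lceil \beta V \rceil$ entries of largest magnitude. The resulting $\hat{\mathbf{A}}^{(k)}$ has spectral norm bounded by that of $\mathbf{U}^{(k)}$, with deletions concentrated on the weakly coupled edges. Following the case analysis of~\cite{rong2019dropedge}, I would distinguish two mutually exclusive cases according to whether the removed support is disjoint from, or intersects, the invariant subspace $\mathcal{M}$ of $\mathbf{G}$. In the disjoint case, a singular-value interlacing argument produces a strict change in the leading non-invariant singular value, i.e.\ $\lambda_{\tilde{\mathbf{G}}} \neq \lambda_{\mathbf{G}}$, which translates through the monotone formula for the smoothing depth into alternative (i). In the intersecting case, $\mathcal{M}_{\tilde{\mathbf{G}}}$ becomes strictly smaller than $\mathcal{M}_{\mathbf{G}}$, strictly reducing the codimension-based information loss at the corresponding smoothing depth and delivering alternative (ii). A short pigeonhole argument shows that for $\beta$ sufficiently small at least one of the two cases must occur.

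The main obstacle will be the disjoint case: the top-$k$ projection is data-dependent and in general asymmetric, so classical symmetric perturbation bounds do not apply directly. I expect to handle this via a Bauer-Fike-type bound on the non-symmetric propagator, controlling $\|\hat{\mathbf{A}}^{(k)} - \mathbf{U}^{(k)}\|_2$ in terms of $\beta$ through the row-wise order statistics of $\mathbf{U}^{(k)}$; the hypothesis that $\beta$ is sufficiently small is exactly what ensures the perturbation is large enough to produce the strict spectral change. A secondary subtlety is the renormalization after sparsification so that $\hat{\mathbf{A}}^{(k)}$ remains an admissible propagator in the sense of~\cite{oono2020graph}; this reduces to verifying that the best $\beta V$-sparse Frobenius approximation preserves the non-negativity and row-sum conditions used in that framework, which I would establish by a truncation-and-rescaling argument and then fold back into the spectral bound above.
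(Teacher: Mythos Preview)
Your core reduction is the same as the paper's: identify the $\ell_0$ row projection in~(\ref{eq:attSpar}) as a deterministic edge-dropping operation (setting $\hat{\mathbf{A}}_i^{(k)}(j)=0$ removes the edge $(i,j)$), observe that $\beta\to 0$ forces sufficiently many edges to be removed, and then import the dichotomy from DropEdge. The difference is one of economy. The paper's argument stops after the identification step and invokes Theorem~1 of~\cite{rong2019dropedge} as a black box; it does not re-derive the spectral contraction from~\cite{oono2020graph}, does not run any interlacing or Bauer--Fike argument, and does not perform the disjoint/intersecting case split you outline. You are essentially reproving the cited result rather than citing it.

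The technical obstacles you flag---asymmetry of the top-$k$ truncation, and whether $\hat{\mathbf{A}}^{(k)}$ remains an admissible (row-stochastic, nonnegative) propagator in the sense of~\cite{oono2020graph}---are genuine, but the paper does not address them either; its proof operates at the same level of informality as the DropEdge result it invokes and implicitly treats the sparsified adjacency as living in the same class of operators. So your additional machinery is not wrong, but it goes beyond what the paper establishes, and the renormalization step you propose is not part of the paper's construction at all.
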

\noindent\textbf{Remarks}
\textit{The result shows that the GNN model with graph neighbor attention (i) increases the maximum number of layers to encounter over-smoothing, or if the number of layers remains, (ii) the over-smoothing phenomenon is alleviated.}

\section{Experiments}
\textbf{Datasets}. We conducted extensive experiments to evaluate the effectiveness of the proposed Attentive GNN model for few-shot learning, over two widely-used few-shot image classification benchmarks, \ie mini-ImageNet~\cite{vinyals2016matching} and tiered-ImageNet~\cite{ren2018meta}.
Mini-ImageNet contains around 60000 images of 100 different classes extracted from the ILSVRC-12 challenge~\cite{krizhevsky2012imagenet}.
We used the proposed splits by~\cite{ravi2016optimization}, \ie 64, 16 and 20 classes for training, validation and testing, respectively.
Tiered-ImageNet dataset is a more challenging data subset from the ILSVRC-12 challenge~\cite{krizhevsky2012imagenet}, which contains more classes that are organized in a hierarchical structure, \ie 608 classes from 34 top categories.
We follow the setups proposed by~\cite{ren2018meta}, and split 34 top categories into 20 (resp. 351 classes), 6 (resp. 97 classes), and 8 (resp. 160 classes), for training, validation, and testing, respectively.
For both datasets, all images are resized to $84 \times 84$. 
\begin{table*}[!t]
   \centering
   \resizebox{0.7\textwidth}{!}{
   \begin{tabular}{llllll}
     \toprule
     \multicolumn{2}{c}{}  &     \multicolumn{2}{c}{mini-ImageNet}  & \multicolumn{2}{c}{tiered-ImageNet}  \\
     \cmidrule(r){3-4} \cmidrule(r){5-6}
     Model     & Trans     &  5-way 1-shot & 5-way 5-shot  & 5-way 1-shot & 5-way 5-shot \\
     \midrule
     Proto-Net~\cite{snell2017prototypical}   & \makecell[c]{\XSolidBrush}   & \makecell[c]{46.14}  & \makecell[c]{65.77} & \makecell[c]{48.58} & \makecell[c]{69.57}\\
     Matching-Net ~\cite{vinyals2016matching}     & \makecell[c]{\XSolidBrush}   & \makecell[c]{46.60}  & \makecell[c]{55.30} & \makecell[c]{-} & \makecell[c]{-}\\
     Reptile~\cite{nichol2018first}                & \makecell[c]{\XSolidBrush}   & \makecell[c]{47.07}  & \makecell[c]{62.74} & \makecell[c]{48.97} &\makecell[c]{66.47} \\
     Relation-Net~\cite{sung2018learning}       & \makecell[c]{\XSolidBrush}   & \makecell[c]{50.44}  & \makecell[c]{65.32} & \makecell[c]{-} & \makecell[c]{-}\\
     MRN-Zero~\cite{he2020memory}       & \makecell[c]{\XSolidBrush}   & \makecell[c]{51.62}  & \makecell[c]{67.06} & \makecell[c]{\bf{\textcolor{blue}{55.22}}} & \makecell[c]{\bf{\textcolor{red}{73.37}}}\\
     SAML~\cite{hao2019collect}                   & \makecell[c]{\XSolidBrush}   & \makecell[c]{52.22}  & \makecell[c]{66.49} & \makecell[c]{-} & \makecell[c]{-}\\
     E$^3$BM~\cite{Liu2020E3BM}   & \makecell[c]{\XSolidBrush}   &  \makecell[c]{53.20}   & \makecell[c]{65.10} & \makecell[c]{52.10} & \makecell[c]{70.20}\\
     FEAT~\cite{ye2020few}   & 
     \makecell[c]{\XSolidBrush}   &  \makecell[c]{\bf{\textcolor{red}{55.15}}}   & \makecell[c]{\bf{\textcolor{red}{71.61}}} & \makecell[c]{-} & \makecell[c]{-}\\
     \midrule
     \bf{GNN based methods}\\
     GNN~\cite{garcia2017few}                    & \makecell[c]{\XSolidBrush}   & \makecell[c]{50.33}  & \makecell[c]{66.41} & \makecell[c]{54.97} & \makecell[c]{70.92}\\
     EGNN~\cite{kim2019edge}                   & \makecell[c]{\XSolidBrush}   & \makecell[c]{52.86}  & 
     \makecell[c]{66.85} & \makecell[c]{-} & \makecell[c]{70.98}\\
     \bf{Ours}   & \makecell[c]{\XSolidBrush}   &  \makecell[c]{\bf{\textcolor{blue}{54.81}}}   & \makecell[c]{\bf{\textcolor{blue}{69.85}}} & \makecell[c]{\bf{\textcolor{red}{57.47}}} & \makecell[c]{\bf{\textcolor{blue}{72.29}}}\\
     \midrule
     MAML~\cite{finn2017model}                   & \makecell[c]{BN}    & \makecell[c]{48.70}  & \makecell[c]{63.11} & \makecell[c]{51.67} & \makecell[c]{70.30}\\
     Reptile~\cite{nichol2018first}                & \makecell[c]{BN}    & \makecell[c]{49.97}  & \makecell[c]{65.99} & \makecell[c]{52.36} & \makecell[c]{71.03}\\
     MAML~\cite{finn2017model}                   & \makecell[c]{\Checkmark}   & \makecell[c]{50.83}  & \makecell[c]{66.19} & \makecell[c]{53.23} & \makecell[c]{70.83}\\
     DN4~\cite{li2019revisiting}                    & \makecell[c]{\Checkmark}   & \makecell[c]{51.24}  & \makecell[c]{71.02} & \makecell[c]{-} & \makecell[c]{-}\\
     Relation-Net~\cite{sung2018learning}       & \makecell[c]{\Checkmark}   & \makecell[c]{51.38}  & \makecell[c]{67.07} & \makecell[c]{54.48} & \makecell[c]{71.31}\\
     Dynamic-Net~\cite{gidaris2018dynamic}       & \makecell[c]{\Checkmark}   & \makecell[c]{56.20}  & \makecell[c]{72.81} & \makecell[c]{-} & \makecell[c]{-}\\
     TEAM~\cite{qiao2019transductive}       & \makecell[c]{\Checkmark}   & \makecell[c]{56.57}  & \makecell[c]{72.04} & \makecell[c]{-} & \makecell[c]{-}\\
     FEAT~\cite{ye2020few}   & 
     \makecell[c]{\Checkmark}   &  \makecell[c]{57.04}   & \makecell[c]{72.89} & \makecell[c]{-} & \makecell[c]{-}\\
     MRN~\cite{he2020memory}   & 
     \makecell[c]{\Checkmark}   &  \makecell[c]{57.83}   & \makecell[c]{71.13} & \makecell[c]{62.65} & \makecell[c]{74.20}\\
     \midrule
     \bf{GNN based methods}\\
     TPN$^{\sharp}$~\cite{liu2018learning}         & \makecell[c]{\Checkmark}   & \makecell[c]{51.94}  & \makecell[c]{67.55} & \makecell[c]{59.91} & \makecell[c]{73.30}\\
     TPN~\cite{liu2018learning}                    & \makecell[c]{\Checkmark}   & \makecell[c]{53.75}  & \makecell[c]{69.43} & \makecell[c]{57.53} & \makecell[c]{72.85}\\
     GNN$^{\natural}$~\cite{garcia2017few}       & \makecell[c]{\Checkmark}   & \makecell[c]{54.14}  & \makecell[c]{70.38} & \makecell[c]{65.11} & \makecell[c]{76.40}\\
     EGNN~\cite{kim2019edge}                   & \makecell[c]{\Checkmark}   & \makecell[c]{59.18}  & \makecell[c]{\bf{\textcolor{red}{76.37}}} & \makecell[c]{63.52} & \makecell[c]{\bf{\textcolor{red}{80.15}}}\\
     \bf{Ours (Normalized)}    & \makecell[c]{\Checkmark}   &
     \makecell[c]{\bf{\textcolor{blue}{59.87}}} & 
     \makecell[c]{\bf{\textcolor{blue}{74.46}}} & \makecell[c]{\bf{\textcolor{blue}{66.87}}} & \makecell[c]{79.26}\\
     \bf{Ours}    & \makecell[c]{\Checkmark}   & \makecell[c]{\bf{\textcolor{red}{60.14}}}  & \makecell[c]{\bf{72.41}} & \makecell[c]{\bf{\textcolor{red}{67.23}}} & \makecell[c]{\bf{\textcolor{blue}{79.55}}}\\
     \bottomrule
   \end{tabular}
   }
    \caption{Few-shot classification accuracy averaged over mini-ImageNet and tiered-ImageNet with the ConvNet-4 backbone. The best and second best results under each setting and dataset are highlighted as \textcolor{red}{Red} and \textcolor{blue}{Blue}, respectively. ``BN'' denotes that the batch normalization where query statistical information is used instead of global batch normalization.}
     \label{table1}
 \end{table*}
\begin{table}[!t]
   \centering
   \resizebox{1\columnwidth}{!}{
   \begin{tabular}{lllll}
     \toprule
     \multicolumn{2}{c}{}  & \multicolumn{2}{c}{tiered-ImageNet}  \\
     \cmidrule(r){3-4}
     Model   & Backbone  &  5-way 1-shot & 5-way 5-shot \\
     \midrule
     Meta-Transfer~\cite{sun2019meta}  & ResNet12  & \makecell[c]{65.62$\pm$ 1.80} & \makecell[c]{80.61$\pm$ 0.90}\\
     Proto-Net~\cite{snell2017prototypical}  & ResNet12  & \makecell[c]{65.65$\pm$ 0.92} & \makecell[c]{83.40$\pm$ 0.65}\\
     MetaOptNet~\cite{lee2019meta}  & ResNet12  & \makecell[c]{65.99$\pm$ 0.72} & \makecell[c]{81.56$\pm$ 0.53}\\     
     CTM~\cite{li2019finding} & ResNet18    & \makecell[c]{68.41$\pm$ 0.39} & \makecell[c]{84.28$\pm$ 1.73}\\
     Meta-Baseline~\cite{chen2020new}  & ResNet12  & \makecell[c]{68.62$\pm$ 0.27} & \makecell[c]{83.29$\pm$ 0.18}\\
     CAN~\cite{hou2019cross} & ResNet12    & \makecell[c]{69.89$\pm$ 0.51} & \makecell[c]{84.23$\pm$ 0.37}\\
     FEAT~\cite{ye2020few} & ResNet12  & \makecell[c]{70.80$\pm$ 0.23} & \makecell[c]{84.79$\pm$ 0.16}\\
     DeepEMD~\cite{Zhang_2020_CVPR} & ResNet12    & \makecell[c]{71.16$\pm$ 0.87} & \makecell[c]{\bf{\textcolor{blue}{86.03$\pm$ 0.58}}}\\
     CAN+Trans~\cite{hou2019cross} & ResNet12    & \makecell[c]{\bf{\textcolor{red}{73.21$\pm$ 0.58}}} & \makecell[c]{84.93$\pm$ 0.38}\\
     \midrule
     \bf{GNN based methods}\\
     TPN~\cite{liu2018learning}  & ResNet12  & \makecell[c]{59.91$\pm$ 0.94} & \makecell[c]{73.30$\pm$ 0.75}\\
     DPGN~\cite{yang2020dpgn} & ResNet12    & \makecell[c]{72.45$\pm$ 0.51} & \makecell[c]{\bf{\textcolor{red}{87.24$\pm$ 0.39}}}\\
     \bf{Ours}  & ResNet12   &  \makecell[c]{\bf{\textcolor{blue}{73.15$\pm$ 0.27}}} & \makecell[c]{84.96$\pm$ 0.12}\\
     \bottomrule
   \end{tabular}
   }
   \caption{Few-shot classification accuracy averaged over tiered-ImageNet with the ResNet backbone. The best (second best resp.) results are highlighted as \textcolor{red}{Red}  (\textcolor{blue}{Blue} resp.).}
   \label{res}
 \end{table}

\textbf{Implementation Details}. We follow most of the DNN-based few-shot learning schemes~\cite{snell2017prototypical,garcia2017few} 
and first apply the popular ConvNet-4 as the backbone feature extractor, with $3\times 3$ convolution kernels, numbers of channels as $[64,96,128,256]$ at corresponding layers, a batch normalization layer, a max pooling layer, and a LeakyReLU activation layer. 
Besides, two dropout layers are adapted to the last two convolution blocks to alleviate over-fitting~\cite{garcia2017few}.
Furthermore, to compare with the more complicated CNN-based methods, we also apply ResNet-12 as the backbone, following the similar setup in~\cite{oreshkin2018tadam}.
On this basis, a fully-connected layer with batch normalization is added to the end for dimensionality reduction.
We conducted both 5-way 1-shot, and 5-way 5-shot experiments, under both inductive and transductive settings~\cite{liu2018learning}.
We use only one query sample for the inductive, and $5$ ($15$ resp.) query samples per class for the transductive experiments on ConvNet-4 (ResNet12 resp.) backbone.
Our models are all trained using Adam optimizer with an initial learning rate of $1\times10^{-3}$.
For ConvNet-4 backbone, the weight decay is set to $10^{-6}$ and the mini-batch sizes are set to 100 / 40 and 30 / 20, for 1-shot / 5-shot inductive and transductive settings, respectively.
We reduce the learning rate to half every 15K and 30K epochs, over mini-ImageNet and tiered-ImageNet, respectively.
For ResNet12 backbone, the weight decay is $10^{-5}$ and the mini-batch sizes are set to 28.
We cut the learning rate to 0.1 every 15K epochs.
The output feature dimension of two backbones is 128 and the number of GNN layers is set to 3.

\textbf{Results}. 
We compare the proposed Attentive GNN to the state-of-the-art CNN- and GNN-based methods, using the ConvNet-4 and ResNet backbone, and Table~\ref{table1} and Table~\ref{res} list the average accuracy of the few-shot image classification, respectively.
For the same algorithm, the accuracy of transductive learning is typically better than that of the inductive learning, by further exploiting the correlation amongst the multiple query samples.
Different from the standard transductive setting which applies $5$ query samples per class, EGNN~\cite{kim2019edge} uses $15$ query samples which is more ``advantageous''. 
The proposed Attentive GNN has achieved promising results under different settings, comparing to the various state-of-the-art few-shot learning methods.

\textbf{Ablation Study}.
We investigate the effectiveness of each proposed attention module by conducting an ablation study.
Fig.~\ref{fig:ablation} plots the image classification accuracy over the tiered-ImageNet dataset, with different variations of the proposed Attentive GNN, by removing the node self-attention (self att) and layer memory attention (memory att) modules. 
Besides, instead of applying layer memory attention which attends to the concatenated early feature, we try another variation by concatenating only the label vectors (label concat).
It is clear that all variations generate degraded results, and some even suffer from more severe over-smoothing, \ie accuracy drops quickly as the number of GNN layers increases. 
We show that the label concatenation is a reasonable alternative to replace layer memory attention which requires less memory complexity.
Furthermore, we study the influence of the graph neighbor attention for few-shot learning by varying the hyper-parameter $\beta$.
Fig.~\ref{fig:beta} plots the inductive image classification accuracy by applying Attentive GNN with varying $\beta$ (\ie ratio of elements maintained in $\mathbf{A}$) in the graph neighbor attention module.
When $\beta = 1$, it is equivalent to removing the graph neighbor attention at all. 
By choosing the optimal $\beta \in [0, 1]$'s for 5-way 1-shot and 5-way 5-shot settings, respectively, the graph neighbor attention can further boost the classification results.
\begin{figure}[!t]
	\center
    \includegraphics[width=0.85\linewidth]{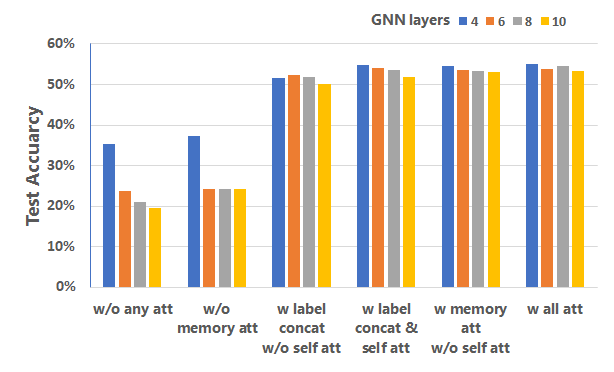}
	\caption{Ablation study: Classification accuracies using Attentive GNN (4 to 10 layers) and its variations, over tiered-ImageNet.}
	\label{fig:ablation}
\end{figure}
\begin{figure}[!t]
	\center
    \includegraphics[width=0.85\linewidth]{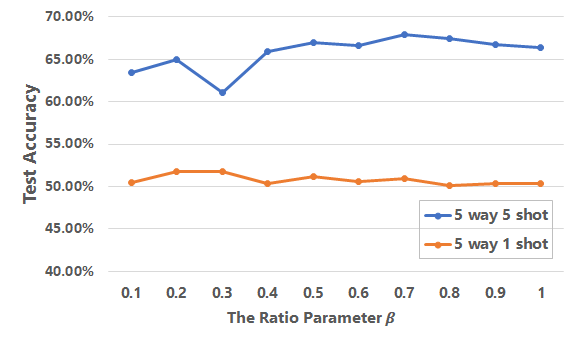}
	\caption{Inductive test accuracy vs. $\beta$ over mini-ImageNet.}
	\label{fig:beta}
\end{figure}

\textbf{Hyper-Parameters}.
There are two hyper-parameters in the proposed Attentive GNN, namely $\alpha$ and $\beta$, corresponding to the ratio for label fusion, and the sparsity ratio in the neighbor attention module. 
Table~\ref{table3} shows how varying these two parameters affects the inductive learning for image classification averaged over tiered-ImageNet.
Both $\alpha$ and $\beta$ range between 0 and 1. Besides, we also test the model when the label fusion mechanism is totally removed, denoted as ``-'' in the table.
The empirical results demonstrate the effectiveness of label fusion with $\alpha = 0.5$ to be a reasonable ratio.
Besides, for 5-way 1-shot learning, the best result is generated when $\beta = 1$, which is equivalent to remove the graph neighbor attention. 
It is because the total number of nodes is small for 5-way 1-shot learning, thus imposing sparsity leads to too restrictive model.
\begin{table}[!]
    \centering
    \resizebox{.4\textwidth}{!}{
   \begin{tabular}{llll}
     \toprule
      \multicolumn{2}{c}{Hyper-Parameter Setting}  &     \multicolumn{2}{c}{Accuracy}   \\
     \cmidrule(r){1-2} \cmidrule(r){3-4}
      \makecell[c]{$\beta$} &  \makecell[c]{$\alpha$} & 
      5-way 1-shot & 5-way 5-shot \\
     \midrule
     \makecell[c]{1.0} & \makecell[c]{-}  & \makecell[c]{54.97}   &  \makecell[c]{70.92}  \\
     \makecell[c]{0.7} & \makecell[c]{-}  & \makecell[c]{57.18}   &  \makecell[c]{70.58}  \\
     \makecell[c]{1.0} & \makecell[c]{0}    & \makecell[c]{57.41}   &  \makecell[c]{72.03}  \\
     \makecell[c]{1.0} & \makecell[c]{0.5}  & \makecell[c]{\bf{57.68}}   &  \makecell[c]{71.03}  \\
     \makecell[c]{0.7} & \makecell[c]{0.5}  & \makecell[c]{57.47}   &  \makecell[c]{\bf{72.29}}  \\
     \bottomrule
   \end{tabular}}
   \captionof{table}{{Inductive accuracy on tiered-ImageNet with different settings. ``-'' means NOT applying node self-attention.}}\label{table3}
\end{table}

\textbf{Robustness in Transductive Learning}.
While the query samples are always \textbf{uniformly} distributed for each class in the conventional transductive learning setting~\cite{liu2018learning}, such assumption may not hold in practice, \eg query set contains samples with \textbf{random} labels. 
We study how robust the proposed Attentive GNN is for such setting by comparing to the baseline GNN method~\cite{garcia2017few} and GNN with only neighbor attention (\eg w/ Neighbor Att.).
In the training, we simulate the query set with samples with random labels correspondingly for Attentive GNN and all competing methods under such setting.
Table~\ref{table4} shows the image classification accuracy with 5-way 1-shot transductive learning, averaged over tiered-ImageNet.
With the query-set samples of ``random'' labels, the proposed Attentive GNN can still generate significantly better results comparing to the vanilla GNN.
Table~\ref{table4} shows that the proposed graph neighbor attention module contributes to the robustness. As the sparse adjacency matrix can attend to the related nodes (\ie nodes with the same class) in an adaptive way, preventing ``over-mixing'' with all nodes.
\begin{table}[!]
    \centering
    \resizebox{0.4\textwidth}{!}{
     \begin{tabular}{lll}
     \toprule
     \makecell[c]{Model} &  \makecell[c]{Random} & \makecell[c]{Uniform} \\
     \midrule
     \makecell[c]{Vanilla GNN~\cite{garcia2017few}}  & \makecell[c]{59.77}   &  \makecell[c]{65.11}  \\
     \makecell[c]{Vanilla GNN w/ Neighbor Att.}  & \makecell[c]{60.18}   &  \makecell[c]{65.49}  \\
     \makecell[c]{Attentive GNN}    & \makecell[c]{\bf{61.39}}   &  \makecell[c]{\bf{67.23}}  \\
     \bottomrule
   \end{tabular} 
   }
   \captionof{table}{Effect of query samples distribution on tiered-ImageNet for 5-way 1-shot task under transductive setting. The total number of query samples for two settings is fixed.}
    \label{table4}
\end{table}
\section{Conclusion}
In this paper, we proposed a novel Attentive GNN model for few-shot learning.
The proposed Attentive GNN makes full use of the relationships between image samples for knowledge modeling and generalization
By introducing a triple-attention mechanism, Attentive GNN model can effectively alleviate over-smoothing and over-fitting issues when applying deep GNN models.
Extensive experiments are conducted over popular mini-ImageNet and tiered-ImageNet, showing that our proposed Attentive GNN achieving promising results comparing to the state-of-the-art few-shot learning methods. We plan to apply Attentive GNN for other challenging applications in future work.

\setcounter{lemma}{0}
\setcounter{proposition}{0}
\setcounter{theorem}{0}
\section{Appendix}
Here we present the detailed proofs of the results for the proposed Attentive GNN, \ie Lemma~\ref{lemma:self-att} and Theorem~\ref{theo}.

\subsection{Proofs of the results for Attentive GNN}

We prove the main results regarding the proposed attentive GNN. 
First of all, we analyze the proposed node self-attention, whose feature and label vector updates are
\begin{equation} 
\tilde{\mathbf{X}}^{(1)} = \mathbf{C}^{\mathbf{f}} \mathbf{X} \;, \;\;\;\; \mathbf{Y}^{(1)}=\alpha\mathbf{Y} + \left(1-\alpha\right)\mathbf{C}^{\mathbf{f}}\mathbf{Y} \;,
\end{equation}
where $\mathbf{C}^{\mathbf{f}}$ denotes the attention map, $\mathbf{X}$ and $\mathbf{Y}$ (resp. $\mathbf{X}^{(1)}$ and $\mathbf{Y}^{(1)}$) denote the input (resp. output) feature and label vectors, respectively.

We prove Lemma~\ref{lemma:self-att} which shows that the proposed node self-attention can alleviate \textit{Over-fitting} by reducing the model complexity comparing to adding more GNN layer.
The output of the $k$-th GNN layer can be represented as

\begin{equation} 
\mathbf{X}^{(k+1)} = \operatorname{F}_k ( \mathbf{X}^{(k)},\, \mathbf{W}^{(k)} ) = \rho \, ( \hat{\mathbf{A}}^{(k)} \, \mathbf{X}^{(k)} \, \mathbf{W}^{(k)} )
\end{equation}

\begin{lemma} 
The node self-attention module is equivalent to a GNN layer if $\alpha = 0$ as
\begin{equation}
    \mathbf{X}^{(k)} = \begin{bmatrix} \mathbf{X}, \mathbf{Y} \end{bmatrix}\,,\;\; \hat{\mathbf{A}}^{(k)} = \mathbf{C}^{\mathbf{f}}\,,\;\; \mathbf{W}^{(k)} = \mathbf{I}\,,\;\;
\end{equation}
\end{lemma}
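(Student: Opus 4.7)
The plan is to verify the equivalence directly by substituting the proposed choices $\mathbf{X}^{(k)}=[\mathbf{X},\mathbf{Y}]$, $\hat{\mathbf{A}}^{(k)}=\mathbf{C}^{\mathbf{f}}$, and $\mathbf{W}^{(k)}=\mathbf{I}$ into the canonical GNN layer update of Eq.~(\ref{eq1}), and comparing the result term-by-term to the node self-attention update of Eq.~(\ref{eq:selfAtt}) specialized to $\alpha=0$. This is fundamentally an algebraic identity rather than an approximation argument, so the proof will be short and constructive.

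First, I would set $\alpha=0$ in the label update rule, which reduces $\mathbf{Y}^{(1)}=\alpha\mathbf{Y}+(1-\alpha)\mathbf{C}^{\mathbf{f}}\mathbf{Y}$ to $\mathbf{Y}^{(1)}=\mathbf{C}^{\mathbf{f}}\mathbf{Y}$. Combined with the feature update $\tilde{\mathbf{X}}^{(1)}=\mathbf{C}^{\mathbf{f}}\mathbf{X}$, the concatenated output becomes $\mathbf{X}^{(1)}=[\tilde{\mathbf{X}}^{(1)},\mathbf{Y}^{(1)}]=[\mathbf{C}^{\mathbf{f}}\mathbf{X},\mathbf{C}^{\mathbf{f}}\mathbf{Y}]$. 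I would then invoke the elementary block-matrix identity that left multiplication distributes over horizontal concatenation, namely $[\mathbf{C}^{\mathbf{f}}\mathbf{X},\mathbf{C}^{\mathbf{f}}\mathbf{Y}]=\mathbf{C}^{\mathbf{f}}[\mathbf{X},\mathbf{Y}]$, to rewrite the self-attention output compactly as $\mathbf{X}^{(1)}=\mathbf{C}^{\mathbf{f}}[\mathbf{X},\mathbf{Y}]$.

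Next, I would plug the stipulated choices into the GNN layer formula and simplify: $\rho(\hat{\mathbf{A}}^{(k)}\mathbf{X}^{(k)}\mathbf{W}^{(k)})=\rho(\mathbf{C}^{\mathbf{f}}[\mathbf{X},\mathbf{Y}]\mathbf{I})=\rho(\mathbf{C}^{\mathbf{f}}[\mathbf{X},\mathbf{Y}])$. Comparing this to the compact form above immediately shows that the two outputs coincide up to the activation $\rho$.

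The only subtle point, and the closest thing to an obstacle, is reconciling the nonlinearity $\rho$: the node self-attention step in Eq.~(\ref{eq:selfAtt}) is written without an elementwise activation, while a generic GNN layer includes one. I would address this by stating the equivalence under the convention that $\rho$ is taken as the identity in this comparison (or equivalently, that both sides are matched at the pre-activation level). With this convention fixed at the outset, the two expressions become pointwise equal and the lemma follows.
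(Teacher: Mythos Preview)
Your proposal is correct and follows essentially the same approach as the paper: substitute the stated choices into the GNN layer formula and identify the result with the self-attention update on the concatenated feature/label matrix. If anything, you are more careful than the paper, which silently drops the activation $\rho$ in its one-line derivation, whereas you explicitly flag and resolve that point by taking $\rho$ as the identity (or matching at the pre-activation level); your use of the block-matrix identity $[\mathbf{C}^{\mathbf{f}}\mathbf{X},\mathbf{C}^{\mathbf{f}}\mathbf{Y}]=\mathbf{C}^{\mathbf{f}}[\mathbf{X},\mathbf{Y}]$ also makes explicit a step the paper leaves implicit.
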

\vspace{-0.1in}
\begin{proof}[Proof of Lemma~\ref{lemma:self-att}]
With the condition for equivalence, the output of the $k$-th GNN layer becomes
\begin{equation} \label{eq:selfGNN}
\mathbf{X}^{(k+1)} = \operatorname{F}_k ( \mathbf{X}^{(k)},\, \mathbf{I}) =    \mathbf{C}^{\mathbf{f}} \, \mathbf{X}^{(k)} \, \mathbf{I}  = \mathbf{C}^{\mathbf{f}} \, \mathbf{X}^{(k)}\;\;.
\end{equation}
Thus, (\ref{eq:selfGNN}) is equivalent to putting the node self-attention to replace the $k$-th GNN layer, with $\mathbf{X}^{(k+1)} = \mathbf{X}^{(1)}$ and $\mathbf{X}^{(k)} = \left[ \mathbf{X},\mathbf{Y}\right]$.
\end{proof}

Next, we prove Proposition~\ref{prop:self-att} which shows the model complexity decrease from a trainable GNN layer to the proposed node self-attention module. 
\begin{proposition} 
Applying the node self-attention module to replace a GNN layer in Attentive GNN, reduces the trainable-parameter complexity from $\mathcal{O}\{ d ( d + L) \}$ to $\mathcal{O}\{ 1 \}$, where $L$ denotes the depth of MLP for generating the adjacency metric.
\end{proposition}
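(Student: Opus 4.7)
The plan is to prove the proposition by directly enumerating and counting the trainable parameters on each side of the comparison, then taking the dominant-order asymptotic in $d$ and $L$. By Lemma~\ref{lemma:self-att}, we already know that the node self-attention module (with $\alpha = 0$) can be viewed as a GNN layer with specific fixed choices $\hat{\mathbf{A}}^{(k)} = \mathbf{C}^{\mathbf{f}}$ and $\mathbf{W}^{(k)} = \mathbf{I}$, so the proof is essentially an accounting argument comparing the free parameters that remain in each construction.

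First I would carefully list all trainable parameters appearing in a standard GNN layer of the form $\mathbf{X}^{(k+1)} = \rho(\hat{\mathbf{A}}^{(k)}\mathbf{X}^{(k)}\mathbf{W}^{(k)})$ as defined in~(\ref{eq1}). There are two sources: (i) the linear transformation $\mathbf{W}^{(k)} \in \mathbb{R}^{d_k \times d_{k+1}}$, which under the working assumption $d_k, d_{k+1} = \Theta(d)$ contributes $\Theta(d^2)$ parameters; and (ii) the MLP of depth $L$ that produces the adjacency entries $\mathbf{U}^{(k)}(i,j) = \mathrm{MLP}^{(k)}(|\mathbf{x}_i^{(k)} - \mathbf{x}_j^{(k)}|)$, whose per-layer weight matrices each carry $\Theta(d)$ trainable weights (the input dimension to the MLP is $d$ and the hidden widths are taken as $\Theta(1)$ or constant factors of $d$ in the usual design), giving $\Theta(dL)$ in total. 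Summing yields $\mathcal{O}(d(d+L))$.

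Next I would count the trainable parameters in the node self-attention module that replaces this layer. The correlation matrices $\mathbf{C}^{\mathbf{x}} = \mathrm{softmax}(\mathbf{X}\mathbf{X}^T \odot \mathbf{M})$ and $\mathbf{C}^{\mathbf{y}} = \mathrm{softmax}(\mathbf{Y}\mathbf{Y}^T)$ are computed purely from the input features and labels and contain no trainable weights. The only learnable object is the fusion operator $\operatorname{fusion}_{\tau}$, which by the simplification stated in the paper is a convex-style combination $\mathbf{C}^{\mathbf{f}} = w_1 \mathbf{C}^{\mathbf{x}} + w_2 \mathbf{C}^{\mathbf{y}}$ realised by $1\times1$ kernels $\tau$; this contributes a constant number of scalars independent of $d$ and $L$. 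Combined with the fact that Lemma~\ref{lemma:self-att} identifies the effective $\mathbf{W}^{(k)}$ with the identity (no free weights) and the effective adjacency with $\mathbf{C}^{\mathbf{f}}$ (which requires no MLP), the total parameter count is $\mathcal{O}(1)$.

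Concluding the comparison is then immediate: replacing one GNN layer by the node self-attention block removes the $\Theta(d^2)$ linear transform and the $\Theta(dL)$ MLP entirely, and introduces only the constant-size fusion kernels, which gives the stated reduction from $\mathcal{O}(d(d+L))$ to $\mathcal{O}(1)$. I do not expect any real obstacle in this proof; the main subtlety to flag explicitly is the implicit assumption on how the MLP width scales with $d$ (so that its parameter count is $\Theta(dL)$ rather than, say, $\Theta(d^2 L)$), and I would state that convention up front so that the asymptotic claim in the proposition is unambiguous.
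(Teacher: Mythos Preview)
Your proposal is correct and follows essentially the same approach as the paper: enumerate the trainable parameters of a generic GNN layer ($\mathbf{W}^{(k)}$ contributing $\mathcal{O}(d^2)$ and the MLP contributing $\mathcal{O}(dL)$), then invoke Lemma~\ref{lemma:self-att} to observe that in the self-attention module those components are fixed and only the $1\times1$ fusion kernels remain, giving $\mathcal{O}(1)$. Your version is in fact more explicit than the paper's, particularly in flagging the implicit assumption on the MLP hidden widths needed to get $\Theta(dL)$ rather than a larger count.
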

\vspace{-0.1in}
\begin{proof}[Proof of Proposition~\ref{prop:self-att}]
For a GNN layer following (\ref{eq1}), both $\mathbf{W}^{(k)}$ and the MLP$^{(k)}$ are trainable, corresponding to free parameters scale as $\mathcal{O}\{d^2\}$ and $\mathcal{O}\{dL\}$, respectively.
On the contrary, based on Lemma~\ref{lemma:self-att}, the proposed node self-attention is equivalent to a GNN layer, with the $\mathbf{W}^{(k)}$ and the MLP$^{(k)}$ fixed. 
The only trainable parameters are the $1 \times 1$ kernels to fuse the $\mathbf{C}^{\mathbf{X}}$ and $\mathbf{C}^{\mathbf{Y}}$, with the complexity scales as $\mathcal{O}\{1\}$.
\vspace{-0.05in}
\end{proof}

Next we show that using graph neighbor attention can help alleviate over-smoothing for training GNN. 
We first quantify the degree of over-smoothing using the definitions from \cite{rong2019dropedge} and \cite{oono2020graph}.
\begin{definition}[Feature Subspace]
Denote the $M$- dimensional subspace 
$\mathcal{M} = \{ \mathbf{U}\Sigma \, | \, \mathbf{U} \in \mathbb{R}^{V \times M}\,,\, \mathbf{U}^T \mathbf{U} = \mathbf{I}_M\,,\,\Sigma \in \mathbf{R}^{M \times d} \}$ as the feature space, with $M \leq V$. 
\end{definition}
\begin{definition}[Projection Loss]
Denote the operator of projection $\mathbf{X} \in \mathbb{R}^{V \times d}$ onto a $M$-dimensional subspace as $\mathbb{P}_M : \mathbb{R}^{V \times d} \rightarrow \mathbb{R}^{V \times d}$ as
\begin{equation}
    \mathbb{P}_{\mathcal{M}} (\mathbf{X}) = \argmin_{\mathbf{Z} \in \mathcal{M}} \left \| \mathbf{X} - \mathbf{Z} \right \|_F\;.
\end{equation}
Denote the projection loss $\theta_{\mathcal{M}} (\mathbf{X})$ as
\begin{equation}
    \theta_{\mathcal{M}} (\mathbf{X}) =  \left \| \mathbf{X} - \mathbb{P}_{\mathcal{M}} (\mathbf{X}) \right \|_F = \min_{\mathbf{Z} \in \mathcal{M}} \left \| \mathbf{X} - \mathbf{Z} \right \|_F\;.
\end{equation}
\end{definition}
\begin{definition}[$\epsilon$-smoothing]
The GNN layer that suffers from  $\epsilon$-smoothing if $\theta_{\mathcal{M}} (\mathbf{X}) < \epsilon$. Given a multi-layer GNN $\bf{G}$ with each the feature output of each layer as $\mathbf{X}^{(k)}$, we define the \textbf{$\epsilon$-smoothing layer} as the minimal value $k$ that encounters $\epsilon$-smoothing, \ie
\begin{equation}
    T(\bf{G}, \epsilon) = \min_{k} \{ \theta_{\mathcal{M}} (\mathbf{X}) < \epsilon \}
\end{equation}
\end{definition}
\begin{definition}[Dimensionality Reduction]
Suppose the dimensionality reduction of the node feature-space after $T$ layers of GNNs is denoted as $\Theta_{T, \bf{G}} = d - T(\bf{G}, \epsilon)$.
\end{definition}

With these definitions from \cite{rong2019dropedge} and \cite{oono2020graph}, we can now prove Theorem~\ref{prop:self-att} for the \textbf{graph neighbor attention} as
\begin{equation} 
\begin{array}{c}
\hat{\mathbf{A}}^{(k)} = \argmin_{\mathbf{A}^{(k)}} \left \| \mathbf{A}^{(k)} - \mathbf{U}^{(k)} \right \|_F \\
s.t. \; \mathbf{U}^{(k)} (i,j) = \mathrm{MLP}^{(k)}\left(\left|\mathbf{x}_{i}^{(k)}-\mathbf{x}_{j}^{(k)}\right|\right)\!, \left\| \mathbf{A}_i^{(k)}\right\|_0 \!\leq\! \beta V.
\end{array}
\end{equation}
Here, $\mathbf{A}_i^{(k)} \in \mathbb{R}^{1 \times V}$ denotes the $i$-th row of $\mathbf{A}^{(k)}$, $\beta \in (0, 1]$ denotes the ratio of nodes maintained for feature update, and $V$ is the number of graph nodes.
Besides, $\mathbf{U}^{(k)}$ is the original adjacency matrix with the graph neighbor attention.

\begin{theorem} 
Denote the same multi-layer GNN model with and without neighbor attention as $\tilde{\bf{G}}$ and $\bf{G}$, respectively.
Besides, denote the number of GNN layers for them to encounter the $\epsilon$-smoothing~\cite{oono2020graph} as $T(\tilde{\bf{G}}, \epsilon)$ and $T(\bf{G}, \epsilon)$, respectively.
With sufficiently small $\beta$ in the node self-attention module, either (i) $T(\tilde{\bf{G}}, \epsilon) \leq T(\bf{G}, \epsilon)$, or (ii) $\Theta_{T(\bf{G}, \epsilon), \bf{G}} > \Theta_{T(\tilde{\bf{G}}, \epsilon), \tilde{\bf{G}}}$, will hold.
\end{theorem}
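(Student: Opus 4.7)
The plan is to mirror the edge-sparsification argument of Rong et al.\ (2019) (DropEdge) and combine it with the subspace convergence theory of Oono and Suzuki (2020), adapted to the deterministic top-$\beta V$ sparsification that the neighbor attention module (\ref{eq:attSpar}) induces. First I would invoke the Oono--Suzuki characterization of $\epsilon$-smoothing, which bounds $T(\bf{G},\epsilon)$ in terms of the spectral contraction rate of the normalized adjacency acting on the orthogonal complement of the limiting subspace $\mathcal{M}$. Since the $\ell_0$ constraint in (\ref{eq:attSpar}) retains only the $\beta V$ largest-magnitude entries per row, $\hat{\mathbf{A}}^{(k)}$ is a deterministic sparsification of the dense attention matrix $\mathbf{U}^{(k)}$ whose operator-norm distance from $\mathbf{U}^{(k)}$ can be controlled by choosing $\beta$ small.

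The core argument is an exhaustive case split on the spectral effect of this sparsification. Let $s(\bf{G})$ and $s(\tilde{\bf{G}})$ denote the respective per-layer contraction rates on $\mathcal{M}^\perp$. If $s(\tilde{\bf{G}}) > s(\bf{G})$, then the Oono--Suzuki per-layer contraction bound $\theta_{\mathcal{M}}(\mathbf{X}^{(k+1)}) \leq s\,\theta_{\mathcal{M}}(\mathbf{X}^{(k)})$ forces $T(\tilde{\bf{G}},\epsilon) > T(\bf{G},\epsilon)$, and the paper's definition $\Theta_{T,\bf{G}} = d - T(\bf{G},\epsilon)$ then yields $\Theta_{T(\bf{G},\epsilon),\bf{G}} > \Theta_{T(\tilde{\bf{G}},\epsilon),\tilde{\bf{G}}}$, i.e., case (ii). Otherwise $s(\tilde{\bf{G}}) \leq s(\bf{G})$, so the sparse graph contracts at least as fast, and the same bound gives $T(\tilde{\bf{G}},\epsilon) \leq T(\bf{G},\epsilon)$, i.e., case (i). Since the two alternatives on $s$ are exhaustive, at least one of (i) and (ii) must hold. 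To lift the single-layer bound to a $T$-layer estimate, I would use a telescoping product of per-layer contractions, mirroring the multi-layer composition lemma in Oono and Suzuki (2020).

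The main obstacle will be making the spectral perturbation rigorous under a \emph{deterministic} sparsification rather than the random edge removal analyzed in DropEdge. The top-$\beta V$ rule keeps the largest-magnitude entries of $\mathbf{U}^{(k)}$, so each discarded entry is dominated by the smallest retained entry in the same row; this monotonicity produces a row-wise $\ell_1$ bound and hence an operator-norm perturbation of order $O(1-\beta)$, which is exactly the regime in which Weyl's inequality lets one compare the spectra of $\hat{\mathbf{A}}^{(k)}$ and $\mathbf{U}^{(k)}$ and deduce the corresponding ordering of $s(\bf{G})$ and $s(\tilde{\bf{G}})$. A secondary subtlety is that the MLP producing $\mathbf{U}^{(k)}$ is trained adaptively, so the perturbation bound must hold uniformly along the training trajectory; I would address this by assuming the MLP outputs are uniformly bounded, a mild condition already implicit in the paper's softmax-normalized architecture.
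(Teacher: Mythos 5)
Your strategy genuinely diverges from the paper's, and the divergence introduces real gaps. The paper's proof is a short reduction: solving \eqref{eq:attSpar} is exactly the row-wise top-$\beta V$ magnitude projection, so every zeroed entry $\hat{\mathbf{A}}_i^{(k)}(j)=0$ deletes the edge $(i,j)$; hence neighbor attention is a deterministic edge-dropping scheme, the number of dropped edges per row tends to $V$ as $\beta\to 0$, and for sufficiently small $\beta$ the disjunction (i)/(ii) is imported verbatim from Theorem~1 of \cite{rong2019dropedge}. You instead try to re-derive the disjunction from spectral contraction rates, and the key inference there is invalid: the Oono--Suzuki machinery gives only one-sided bounds $\theta_{\mathcal{M}}(\mathbf{X}^{(k+1)})\le s\,\theta_{\mathcal{M}}(\mathbf{X}^{(k)})$, so comparing $s(\tilde{\mathbf{G}})$ with $s(\mathbf{G})$ compares \emph{upper bounds} on the smoothing layers, not the layers themselves; neither ``$s(\tilde{\mathbf{G}})>s(\mathbf{G})$ forces $T(\tilde{\mathbf{G}},\epsilon)>T(\mathbf{G},\epsilon)$'' nor the opposite branch follows. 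Your case split closes only if one takes the paper's literal formula $\Theta_{T,\mathbf{G}}=d-T(\mathbf{G},\epsilon)$ at face value, in which case (ii) is just the negation of (i) and the ``theorem'' is a tautology holding for any $\beta$ and any pair of models --- proving nothing about neighbor attention. Under the intended meaning of $\Theta$ as information loss (dimensionality reduction of the limiting subspace, in the sense of \cite{oono2020graph}), your two alternatives on $s$ are not exhaustive for (i)-or-(ii), since (ii) concerns $\dim\mathcal{M}$, which a contraction-rate dichotomy never touches.

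The perturbation step is also in the wrong regime. Your row-wise monotonicity argument makes $\|\hat{\mathbf{A}}^{(k)}-\mathbf{U}^{(k)}\|$ small when few entries are discarded, i.e., when $\beta$ is near $1$; the theorem's hypothesis is the opposite extreme, $\beta$ sufficiently \emph{small}, where almost all of each softmax-normalized row is zeroed (the discarded mass is bounded only by $(1-\beta)/\beta$ times the retained mass, which blows up as $\beta\to 0$), so Weyl's inequality delivers nothing useful. The tell is that your case split never invokes the small-$\beta$ hypothesis, whereas in the paper's argument smallness of $\beta$ is the entire point: it guarantees enough edges are removed that DropEdge's Theorem~1 applies, and the mechanism there is increased graph disconnection enlarging the limiting subspace, not spectral closeness. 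If you want a self-contained proof rather than the paper's citation, the missing piece is a deterministic analogue of that theorem (many removed edges increase the number of connected components or otherwise raise $\dim\mathcal{M}$); your uniform-boundedness assumption on the MLP outputs is harmless but does not supply it. For fairness, note the paper's own proof silently transfers a result about \emph{random} edge removal to data-dependent \emph{deterministic} removal --- a genuine gap on its side too --- but your proposal does not fill that gap either.
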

\begin{proof}[Proof of Theorem~\ref{prop:self-att}]
Given the original $\mathbf{U}^{(k)}$, the solution to (\ref{eq:attSpar}) is achieved using the projection onto a $\ell_0$ unit ball, \ie keeping the $\beta V$ elements of each $\mathbf{U}_i^{(k)}$ with the largest magnitudes~\cite{wen2015structured}, \ie
\begin{equation} \label{sol:balproj}
 \hat{\mathbf{A}}_i^{(k)} (j) = \left \{ \begin{matrix}
 \mathbf{U}_i^{(k)} (j) & , \;\; j \in \Omega_{\beta V}^i \\
 0  & ,\;\; j \in \bar{\Omega}_{\beta V}^i
\end{matrix}\right .
\end{equation}
Here, the set $\Omega_{\beta V}^i = \text{supp}(\hat{\mathbf{A}}_i^{(k)})$ indexes the top-$\beta V$ elements of largest magnitude in $\mathbf{U}_i^{(k)}$, and $\bar{\Omega}_{\beta V}^i$ denotes the complement set of $\Omega_K^i$.
When $\hat{\mathbf{A}}_i^{(k)} (j) = 0$, it is equivalent to remove the edge connecting the $i$-th node and $j$-th node.
Thus, $| \bar{\Omega}_{\beta V}^i |$ equals to the number of edges been dropped by the node self-attention, and $| \bar{\Omega}_{\beta V}^i | \rightarrow V$ as $\beta \rightarrow 0$. 

Therefore, when $\beta$ is sufficiently small, there are sufficient number of edges been dropped by the node self-attention.
Based on the \textbf{Theorem 1} in \cite{rong2019dropedge}, we have either of the two to alleviate over-smoothing phenomenon:
\begin{itemize}
    \item The number of layers without $\epsilon$-smoothing increases by node self-attention, \ie $T(\tilde{\bf{G}}, \epsilon) \leq T(\bf{G}, \epsilon)$.
    \item The information loss (\ie dimensionality reduction by feature embedding) decreases by node self-attention, \ie $\Theta_{T(\bf{G}, \epsilon), \bf{G}} > \Theta_{T(\tilde{\bf{G}}, \epsilon), \tilde{\bf{G}}}$
\end{itemize}
\vspace{-0.05in}
\end{proof}

\bibliography{egbib}

\begin{thebibliography}{46}
\providecommand{\natexlab}[1]{#1}
\providecommand{\url}[1]{\texttt{#1}}
\providecommand{\urlprefix}{URL }
\expandafter\ifx\csname urlstyle\endcsname\relax
  \providecommand{\doi}[1]{doi:\discretionary{}{}{}#1}\else
  \providecommand{\doi}{doi:\discretionary{}{}{}\begingroup
  \urlstyle{rm}\Url}\fi

\bibitem[{Bruna et~al.(2013)Bruna, Zaremba, Szlam, and
  LeCun}]{bruna2013spectral}
Bruna, J.; Zaremba, W.; Szlam, A.; and LeCun, Y. 2013.
\newblock Spectral networks and locally connected networks on graphs.
\newblock \emph{arXiv preprint arXiv:1312.6203} .

\bibitem[{Chen et~al.(2020)Chen, Wang, Liu, Xu, and Darrell}]{chen2020new}
Chen, Y.; Wang, X.; Liu, Z.; Xu, H.; and Darrell, T. 2020.
\newblock A new meta-baseline for few-shot learning.
\newblock \emph{arXiv preprint arXiv:2003.04390} .

\bibitem[{Cheng, Dong, and Lapata(2016)}]{cheng2016long}
Cheng, J.; Dong, L.; and Lapata, M. 2016.
\newblock Long Short-Term Memory-Networks for Machine Reading.
\newblock In \emph{Proceedings of the 2016 Conference on Empirical Methods in
  Natural Language Processing}, 551--561.

\bibitem[{Defferrard, Bresson, and
  Vandergheynst(2016)}]{defferrard2016convolutional}
Defferrard, M.; Bresson, X.; and Vandergheynst, P. 2016.
\newblock Convolutional neural networks on graphs with fast localized spectral
  filtering.
\newblock In \emph{Advances in neural information processing systems},
  3844--3852.

\bibitem[{Fei-Fei, Fergus, and Perona(2006)}]{fei2006one}
Fei-Fei, L.; Fergus, R.; and Perona, P. 2006.
\newblock One-shot learning of object categories.
\newblock \emph{IEEE transactions on pattern analysis and machine intelligence}
  28(4): 594--611.

\bibitem[{Finn, Abbeel, and Levine(2017)}]{finn2017model}
Finn, C.; Abbeel, P.; and Levine, S. 2017.
\newblock Model-agnostic meta-learning for fast adaptation of deep networks.
\newblock In \emph{Proceedings of the 34th International Conference on Machine
  Learning-Volume 70}, 1126--1135. JMLR. org.

\bibitem[{Gidaris and Komodakis(2018)}]{gidaris2018dynamic}
Gidaris, S.; and Komodakis, N. 2018.
\newblock Dynamic few-shot visual learning without forgetting.
\newblock In \emph{Proceedings of the IEEE Conference on Computer Vision and
  Pattern Recognition}, 4367--4375.

\bibitem[{Hao et~al.(2019)Hao, He, Cheng, Wang, Cao, and Tao}]{hao2019collect}
Hao, F.; He, F.; Cheng, J.; Wang, L.; Cao, J.; and Tao, D. 2019.
\newblock Collect and Select: Semantic Alignment Metric Learning for Few-Shot
  Learning.
\newblock In \emph{Proceedings of the IEEE International Conference on Computer
  Vision}, 8460--8469.

\bibitem[{He, Liu, and Hong(2020)}]{he2020memory}
He, J.; Liu, X.; and Hong, R. 2020.
\newblock Memory-Augmented Relation Network for Few-Shot Learning.
\newblock \emph{arXiv preprint arXiv:2005.04414} .

\bibitem[{Hou et~al.(2019)Hou, Chang, Bingpeng, Shan, and Chen}]{hou2019cross}
Hou, R.; Chang, H.; Bingpeng, M.; Shan, S.; and Chen, X. 2019.
\newblock Cross Attention Network for Few-shot Classification.
\newblock In \emph{Advances in Neural Information Processing Systems},
  4005--4016.

\bibitem[{Huang et~al.(2017)Huang, Liu, Van Der~Maaten, and
  Weinberger}]{huang2017densely}
Huang, G.; Liu, Z.; Van Der~Maaten, L.; and Weinberger, K.~Q. 2017.
\newblock Densely connected convolutional networks.
\newblock In \emph{Proceedings of the IEEE conference on computer vision and
  pattern recognition}, 4700--4708.

\bibitem[{Ji et~al.(2020)Ji, Yang, Zhang, and Tay}]{ji2020gfcn}
Ji, F.; Yang, J.; Zhang, Q.; and Tay, W.~P. 2020.
\newblock GFCN: A New Graph Convolutional Network Based on Parallel Flows.
\newblock In \emph{ICASSP 2020-2020 IEEE International Conference on Acoustics,
  Speech and Signal Processing (ICASSP)}, 3332--3336. IEEE.

\bibitem[{Ke et~al.(2020)Ke, Pan, Wen, and Li}]{ke2020compare}
Ke, L.; Pan, M.; Wen, W.; and Li, D. 2020.
\newblock Compare Learning: Bi-Attention Network for Few-Shot Learning.
\newblock In \emph{ICASSP 2020-2020 IEEE International Conference on Acoustics,
  Speech and Signal Processing (ICASSP)}, 2233--2237. IEEE.

\bibitem[{Kim et~al.(2019)Kim, Kim, Kim, and Yoo}]{kim2019edge}
Kim, J.; Kim, T.; Kim, S.; and Yoo, C.~D. 2019.
\newblock Edge-labeling graph neural network for few-shot learning.
\newblock In \emph{Proceedings of the IEEE Conference on Computer Vision and
  Pattern Recognition}, 11--20.

\bibitem[{Kipf and Welling(2017)}]{kipf2016semi}
Kipf, T.~N.; and Welling, M. 2017.
\newblock Semi-supervised classification with graph convolutional networks.
\newblock \emph{International Conference on Learning Representations} .

\bibitem[{Krizhevsky, Sutskever, and Hinton(2012)}]{krizhevsky2012imagenet}
Krizhevsky, A.; Sutskever, I.; and Hinton, G.~E. 2012.
\newblock Imagenet classification with deep convolutional neural networks.
\newblock In \emph{Advances in neural information processing systems},
  1097--1105.

\bibitem[{Lee, Lee, and Kang(2019)}]{lee2019self}
Lee, J.; Lee, I.; and Kang, J. 2019.
\newblock Self-attention graph pooling.
\newblock In \emph{36th International Conference on Machine Learning, ICML
  2019}, 6661--6670. International Machine Learning Society (IMLS).

\bibitem[{Lee et~al.(2019)Lee, Maji, Ravichandran, and Soatto}]{lee2019meta}
Lee, K.; Maji, S.; Ravichandran, A.; and Soatto, S. 2019.
\newblock Meta-learning with differentiable convex optimization.
\newblock In \emph{Proceedings of the IEEE Conference on Computer Vision and
  Pattern Recognition}, 10657--10665.

\bibitem[{Li et~al.(2019{\natexlab{a}})Li, Eigen, Dodge, Zeiler, and
  Wang}]{li2019finding}
Li, H.; Eigen, D.; Dodge, S.; Zeiler, M.; and Wang, X. 2019{\natexlab{a}}.
\newblock Finding task-relevant features for few-shot learning by category
  traversal.
\newblock In \emph{Proceedings of the IEEE Conference on Computer Vision and
  Pattern Recognition}, 1--10.

\bibitem[{Li, Han, and Wu(2018)}]{li2018deeper}
Li, Q.; Han, Z.; and Wu, X.-M. 2018.
\newblock Deeper insights into graph convolutional networks for semi-supervised
  learning.
\newblock In \emph{Thirty-Second AAAI Conference on Artificial Intelligence}.

\bibitem[{Li et~al.(2019{\natexlab{b}})Li, Wang, Xu, Huo, Gao, and
  Luo}]{li2019revisiting}
Li, W.; Wang, L.; Xu, J.; Huo, J.; Gao, Y.; and Luo, J. 2019{\natexlab{b}}.
\newblock Revisiting local descriptor based image-to-class measure for few-shot
  learning.
\newblock In \emph{Proceedings of the IEEE Conference on Computer Vision and
  Pattern Recognition}, 7260--7268.

\bibitem[{Liu et~al.(2019)Liu, Lee, Park, Kim, Yang, Hwang, and
  Yang}]{liu2018learning}
Liu, Y.; Lee, J.; Park, M.; Kim, S.; Yang, E.; Hwang, S.~J.; and Yang, Y. 2019.
\newblock Learning to propagate labels: Transductive propagation network for
  few-shot learning.
\newblock In \emph{7th International Conference on Learning Representations,
  ICLR 2019}. International Conference on Learning Representations, ICLR.

\bibitem[{Liu, Schiele, and Sun(2020)}]{Liu2020E3BM}
Liu, Y.; Schiele, B.; and Sun, Q. 2020.
\newblock An Ensemble of Epoch-wise Empirical Bayes for Few-shot Learning.
\newblock In \emph{European Conference on Computer Vision (ECCV)}.

\bibitem[{Maaten and Hinton(2008)}]{maaten2008visualizing}
Maaten, L. v.~d.; and Hinton, G. 2008.
\newblock Visualizing data using t-SNE.
\newblock \emph{Journal of machine learning research} 9(Nov): 2579--2605.

\bibitem[{Nichol, Achiam, and Schulman(2018)}]{nichol2018first}
Nichol, A.; Achiam, J.; and Schulman, J. 2018.
\newblock On first-order meta-learning algorithms.
\newblock \emph{arXiv preprint arXiv:1803.02999} .

\bibitem[{Oono and Suzuki(2020)}]{oono2020graph}
Oono, K.; and Suzuki, T. 2020.
\newblock Graph neural networks exponentially lose expressive power for node
  classification.
\newblock In \emph{International Conference on Learning Representations}.

\bibitem[{Oreshkin, L{\'o}pez, and Lacoste(2018)}]{oreshkin2018tadam}
Oreshkin, B.; L{\'o}pez, P.~R.; and Lacoste, A. 2018.
\newblock Tadam: Task dependent adaptive metric for improved few-shot learning.
\newblock In \emph{Advances in Neural Information Processing Systems},
  721--731.

\bibitem[{Parikh et~al.(2016)Parikh, T{\"a}ckstr{\"o}m, Das, and
  Uszkoreit}]{parikh2016decomposable}
Parikh, A.; T{\"a}ckstr{\"o}m, O.; Das, D.; and Uszkoreit, J. 2016.
\newblock A Decomposable Attention Model for Natural Language Inference.
\newblock In \emph{Proceedings of the 2016 Conference on Empirical Methods in
  Natural Language Processing}, 2249--2255.

\bibitem[{Qiao et~al.(2019)Qiao, Shi, Li, Wang, Huang, and
  Tian}]{qiao2019transductive}
Qiao, L.; Shi, Y.; Li, J.; Wang, Y.; Huang, T.; and Tian, Y. 2019.
\newblock Transductive episodic-wise adaptive metric for few-shot learning.
\newblock In \emph{Proceedings of the IEEE International Conference on Computer
  Vision}, 3603--3612.

\bibitem[{Ravi and Larochelle(2017)}]{ravi2016optimization}
Ravi, S.; and Larochelle, H. 2017.
\newblock Optimization as a model for few-shot learning.
\newblock \emph{International Conference on Learning Representations} .

\bibitem[{Ren et~al.(2018)Ren, Triantafillou, Ravi, Snell, Swersky, Tenenbaum,
  Larochelle, and Zemel}]{ren2018meta}
Ren, M.; Triantafillou, E.; Ravi, S.; Snell, J.; Swersky, K.; Tenenbaum, J.~B.;
  Larochelle, H.; and Zemel, R.~S. 2018.
\newblock Meta-Learning for Semi-Supervised Few-Shot Classification.
\newblock In \emph{International Conference on Learning Representations}.

\bibitem[{Rong et~al.(2019)Rong, Huang, Xu, and Huang}]{rong2019dropedge}
Rong, Y.; Huang, W.; Xu, T.; and Huang, J. 2019.
\newblock DropEdge: Towards Deep Graph Convolutional Networks on Node
  Classification.
\newblock In \emph{International Conference on Learning Representations}.

\bibitem[{Satorras and Estrach(2018)}]{garcia2017few}
Satorras, V.~G.; and Estrach, J.~B. 2018.
\newblock Few-Shot Learning with Graph Neural Networks.
\newblock In \emph{International Conference on Learning Representations}.

\bibitem[{Snell, Swersky, and Zemel(2017)}]{snell2017prototypical}
Snell, J.; Swersky, K.; and Zemel, R. 2017.
\newblock Prototypical networks for few-shot learning.
\newblock In \emph{Advances in neural information processing systems},
  4077--4087.

\bibitem[{Sperduti and Starita(1997)}]{sperduti1997supervised}
Sperduti, A.; and Starita, A. 1997.
\newblock Supervised neural networks for the classification of structures.
\newblock \emph{IEEE Transactions on Neural Networks} 8(3): 714--735.

\bibitem[{Sun et~al.(2019)Sun, Liu, Chua, and Schiele}]{sun2019meta}
Sun, Q.; Liu, Y.; Chua, T.-S.; and Schiele, B. 2019.
\newblock Meta-transfer learning for few-shot learning.
\newblock In \emph{Proceedings of the IEEE conference on computer vision and
  pattern recognition}, 403--412.

\bibitem[{Sung et~al.(2018)Sung, Yang, Zhang, Xiang, Torr, and
  Hospedales}]{sung2018learning}
Sung, F.; Yang, Y.; Zhang, L.; Xiang, T.; Torr, P.~H.; and Hospedales, T.~M.
  2018.
\newblock Learning to compare: Relation network for few-shot learning.
\newblock In \emph{Proceedings of the IEEE Conference on Computer Vision and
  Pattern Recognition}, 1199--1208.

\bibitem[{Vaswani et~al.(2017)Vaswani, Shazeer, Parmar, Uszkoreit, Jones,
  Gomez, Kaiser, and Polosukhin}]{vaswani2017attention}
Vaswani, A.; Shazeer, N.; Parmar, N.; Uszkoreit, J.; Jones, L.; Gomez, A.~N.;
  Kaiser, {\L}.; and Polosukhin, I. 2017.
\newblock Attention is all you need.
\newblock In \emph{Advances in neural information processing systems},
  5998--6008.

\bibitem[{Veli{\v{c}}kovi{\'c} et~al.(2018)Veli{\v{c}}kovi{\'c}, Cucurull,
  Casanova, Romero, Li{\`o}, and Bengio}]{velivckovic2017graph}
Veli{\v{c}}kovi{\'c}, P.; Cucurull, G.; Casanova, A.; Romero, A.; Li{\`o}, P.;
  and Bengio, Y. 2018.
\newblock Graph Attention Networks.
\newblock In \emph{International Conference on Learning Representations}.

\bibitem[{Vinyals et~al.(2016)Vinyals, Blundell, Lillicrap, Wierstra
  et~al.}]{vinyals2016matching}
Vinyals, O.; Blundell, C.; Lillicrap, T.; Wierstra, D.; et~al. 2016.
\newblock Matching networks for one shot learning.
\newblock In \emph{Advances in neural information processing systems},
  3630--3638.

\bibitem[{Wen, Ravishankar, and Bresler(2015)}]{wen2015structured}
Wen, B.; Ravishankar, S.; and Bresler, Y. 2015.
\newblock Structured overcomplete sparsifying transform learning with
  convergence guarantees and applications.
\newblock \emph{International Journal of Computer Vision} 114(2-3): 137--167.

\bibitem[{Xu et~al.(2018)Xu, Li, Tian, Sonobe, Kawarabayashi, and
  Jegelka}]{xu2018representation}
Xu, K.; Li, C.; Tian, Y.; Sonobe, T.; Kawarabayashi, K.-i.; and Jegelka, S.
  2018.
\newblock Representation Learning on Graphs with Jumping Knowledge Networks.
\newblock In \emph{ICML}.

\bibitem[{Yang et~al.(2020)Yang, Li, Zhang, Zhou, Zhou, and Liu}]{yang2020dpgn}
Yang, L.; Li, L.; Zhang, Z.; Zhou, X.; Zhou, E.; and Liu, Y. 2020.
\newblock DPGN: Distribution Propagation Graph Network for Few-shot Learning.
\newblock In \emph{Proceedings of the IEEE/CVF Conference on Computer Vision
  and Pattern Recognition}, 13390--13399.

\bibitem[{Ye et~al.(2020)Ye, Hu, Zhan, and Sha}]{ye2020few}
Ye, H.-J.; Hu, H.; Zhan, D.-C.; and Sha, F. 2020.
\newblock Few-shot learning via embedding adaptation with set-to-set functions.
\newblock In \emph{Proceedings of the IEEE/CVF Conference on Computer Vision
  and Pattern Recognition}, 8808--8817.

\bibitem[{Zhang et~al.(2020{\natexlab{a}})Zhang, Cai, Lin, and
  Shen}]{Zhang_2020_CVPR}
Zhang, C.; Cai, Y.; Lin, G.; and Shen, C. 2020{\natexlab{a}}.
\newblock DeepEMD: Few-Shot Image Classification With Differentiable Earth
  Mover's Distance and Structured Classifiers.
\newblock In \emph{IEEE/CVF Conference on Computer Vision and Pattern
  Recognition (CVPR)}.

\bibitem[{Zhang et~al.(2020{\natexlab{b}})Zhang, Zhang, Lu, Xiang, and
  Wen}]{zhang2020adargcn}
Zhang, J.; Zhang, M.; Lu, Z.; Xiang, T.; and Wen, J. 2020{\natexlab{b}}.
\newblock AdarGCN: Adaptive Aggregation GCN for Few-Shot Learning.
\newblock \emph{arXiv preprint arXiv:2002.12641} .

\end{thebibliography}
\end{document}